\documentclass[twocolumn]{article} 
\usepackage[a4paper, left=2cm, right=2cm, top=2.5cm, bottom=2.5cm]{geometry}
                        
\usepackage{orcidlink}
\usepackage[american]{babel}
\usepackage{authblk}

\usepackage{natbib} 
\usepackage{mathtools} 
\usepackage{amssymb}
\usepackage{amsthm}
\usepackage{graphicx}
\usepackage{subcaption}
\usepackage{array}
\usepackage{colortbl}
\newcolumntype{C}[1]{>{\centering\arraybackslash}p{#1}}
\usepackage{booktabs} 
\usepackage{tikz} 
\usepackage{makecell}
\usepackage{multirow}
\usepackage[ruled,vlined]{algorithm2e}
\SetKwComment{Comment}{\# }{}
\usepackage[table]{xcolor}
\usepackage{pifont}
\usepackage{rotating}
\usepackage{booktabs}
\usepackage{fontawesome5}
\usepackage{hyperref}

\newcommand{\unblocked}{\textcolor{green}{\ding{51}}} 
\newcommand{\blocked}{\textcolor{red}{\ding{55}}}
\newcommand{\potblocked}{\blocked\hspace{0.1em} or \unblocked}
\newcommand{\EE}{\mathbb{E}}
\newcommand{\RR}{\mathbb{R}}
\newcommand{\Var}{\mathrm{Var}}

\newcommand{\doop}{\mathrm{do}}
\newcommand{\ind}{\perp\!\!\!\perp}

\newcommand{\ipath}{\dashrightarrow}
\newcommand{\pathi}{\dashleftarrow}

\newcommand{\ancestors}{\mathrm{an}}
\newcommand{\argmin}{\mathrm{arg}\,\mathrm{min}}

\newtheorem{theorem}{Theorem}[section]
\newtheorem{lemma}[theorem]{Lemma}
\newtheorem{proposition}[theorem]{Proposition}

\newtheorem{definition}[theorem]{Definition}
\newtheorem{example}[theorem]{Example}

\theoremstyle{remark}
\newtheorem{remark}[theorem]{Remark}

\title{cc-Shapley: Measuring Multivariate Feature Importance Needs Causal Context}
\author[1]{Jörg Martin\thanks{Corresponding author: joerg.martin@ptb.de}}
\author[1,2,3]{Stefan Haufe}

\affil[1]{Physikalisch-Technische Bundesanstalt, Berlin, Germany}
\affil[2]{Technische Universität Berlin, Berlin, Germany}
\affil[3]{Charité – Universitätsmedizin Berlin, Berlin, Germany}

\begin{document}
\maketitle

\begin{abstract}
Explainable artificial intelligence promises to yield insights into relevant features, thereby enabling humans to examine and scrutinize machine learning models or even facilitating scientific discovery. Considering the widespread technique of Shapley values, we find that purely data-driven operationalization of multivariate feature importance is unsuitable for such purposes. Even for simple problems with two features, spurious associations due to collider bias and suppression arise from considering one feature only in the observational context of the other, which can lead to misinterpretations. Causal knowledge about the data-generating process is required to identify and correct such misleading feature attributions. We propose cc-Shapley (causal context Shapley), an interventional modification of conventional observational Shapley values leveraging knowledge of the data's causal structure, thereby analyzing the relevance of a feature in the causal context of the remaining features. We show theoretically that this eradicates spurious association induced by collider bias. We compare the behavior of Shapley and cc-Shapley values on various, synthetic, and real-world datasets. We observe nullification or reversal of associations compared to univariate feature importance when moving from observational to cc-Shapley.
\end{abstract}
\section{Introduction}
\label{sec:introduction}
Feature attribution is a common paradigm in the field of Explainable Artificial Intelligence (XAI). 
Its goal is to quantify the importance of each feature by attributing a score to it.
It is often assumed that using such techniques allows us to check whether a certain model is ``correct'', i.e., whether it misbehaves on certain data points or whether it uses unexpected, possibly unfavorable, structures for its prediction \citep{ribeiro2016should,lapuschkin2019unmasking,anders2022finding}. Likewise, XAI is hoped to boost scientific discovery by unveiling hidden multivariate patterns in the data that are associated with prediction targets of interest \citep{jimenez2020drug,wong2024discovery,samek2019towards}. All of these promises are based on the assumption that the attributions gained with such methods do not show spurious patterns and can be interpreted unambiguously. This work aims to show that this assumption is an illusion unless we include causal knowledge into our considerations.

Our analysis focuses on Shapley values, a concept of game-theoretical origin that has become a well-established approach to XAI. When predicting a target variable $Y$ from features $\mathcal{F}=\{X_1,\ldots,X_n\}$ the \emph{Shapley value} of a feature $X_j$ is defined as\footnote{There are various versions of \eqref{eq:shapley_values}, cf. Appendix \ref{app:on_the_used_version_of_shapley_values}.} \citep{shapley1953value}
\begin{align}
    \label{eq:shapley_values}
    \phi(X_j) = \sum_{\mathcal{S}\subseteq \mathcal{F}\backslash\{X_j\}} {\small \frac{|\mathcal{S}|! (|\mathcal{F}|-|\mathcal{S}|-1)!}{|\mathcal{F}|!}} I_{\mathcal{S}}(X_j)\,,
\end{align}
where we introduce the notation 
\begin{align}
    \label{eq:obs_importance}
    I_{\mathcal{S}}(X_j) = \EE[Y|X_j, \mathcal{S}] - \EE[Y|\mathcal{S}]
\end{align}
for the change in prediction when we add observation $X_j$ to the observed features $\mathcal{S}\subseteq \mathcal{F}$. 
Formula \eqref{eq:shapley_values} computes the weighted sum of these changes, where the combinatorial weights can be interpreted through the probability of randomly drawing $\mathcal{S}$, cf. Appendix \ref{app:on_the_used_version_of_shapley_values}. For binary $Y\in\{0,1\}$, as considered below, \eqref{eq:obs_importance} simplifies to a difference of probabilities.

The computational complexity behind \eqref{eq:shapley_values} quickly increases with $|\mathcal{F}|$. For this reason, several scalable modifications have been proposed, e.g., by \cite{lundberg2017unified} and \cite{janzing2020feature}. We show, however, that even without any approximations put on top, $\phi$ is unsuitable for the purposes generally targeted by XAI. To illustrate this, we will use the following easy running example throughout this work.

\begin{example}[Breakfast and diabetes]
\label{ex:diabetes_breakfast}
Suppose blood glucose $G$ is measured in a patient to assess whether he or she has diabetes ($Y=1$) or not ($Y=0$). Unfortunately, the patient ignores the doctor's request not to eat breakfast before the test. We assume that the measured value $G$ follows the (simplified) relation
\begin{align}
    \label{eq:breakfast_diabetes}
    G = 85 + 0.4 \cdot C + 40 \cdot Y + U \,,
\end{align}
where $C$ denotes the carbohydrate intake during breakfast and $U$ denotes independent noise. We assume that $Y\sim \mathrm{Bernoulli}(0.15)$, $C\sim \mathcal{N}(60;25^2)$ and $U\sim \mathcal{N}(0;10^2)$.
\end{example}

The graph in Figure \ref{fig:causal_graph_diabetes_breakfast} illustrates the causal structure of Example \ref{ex:diabetes_breakfast}: the value of $G$ is caused by the variables $Y$ and $C$. The plot on the left of Figure \ref{fig:shapley_values_diabetes_breakfast} shows the Shapley values \eqref{eq:shapley_values} for data generated from this simple problem. A red color indicates a high value of a feature, whereas a blue color indicates a low value. A positive Shapley value $\phi(G)=\frac{1}{2} I_{\emptyset}(G)+\frac{1}{2} I_{C}(G)$ indicates a positive association whereas larger $G$ tend to co-occur with larger $Y$ if $G$ is either considered alone ($\mathcal{S}=\emptyset$) or in the context of a fixed $C$ ($\mathcal{S}=\{C\}$). As red points in Figure \ref{fig:shapley_values_diabetes_breakfast} are concentrated on the right for $G$, we could conclude that high $G$ values makes diabetes more likely, which matches intuition. We call this \emph{positive relevance} in the following.

The Shapley values for $C$, on the other hand, indicate a \emph{negative relevance}: high values of $C$ are associated with a lower diabetes probability. Does this imply that a single high carbohydrate intake makes it less likely that the patient has diabetes? This seems absurd and rightly so. The underlying reason is instead something which one might dub the ``explain away effect'' but which is typically known as \emph{suppression} in the literature \citep{conger1974revised}: When the patient consumed many carbohydrates there is no need for diabetes to explain high values of $G$.
The presence of suppression in our example can be read off the causal graph in Figure \ref{fig:causal_graph_diabetes_breakfast}: the suppressor $C$ is connected to the target $Y$ via a node $G$ with two incoming arrows (known as a \emph{collider}). Conditioning on a collider leads to a spurious association, known as \emph{collider bias}, which makes $C$ a suppressor of $G$. Collider bias and suppression are in this work considered as two sides of the same coin, cf. Section \ref{subsec:collider_bias}.
 Not every feature attributed negative relevance by $\phi$ will automatically be a suppressor. In other examples a feature can, of course, be of negative relevance as it really causes the target to decrease. Apparently, the ``right'' interpretation of a chart like the left one in Figure \ref{fig:shapley_values_diabetes_breakfast} can vary from situation to situation. This ambiguity can easily lead to wrong conclusions and undermines the usability of XAI for purposes such as model analysis, let alone for scientific discovery.
 
\begin{figure}[t]
    \centering
    \begin{subfigure}[t]{0.5\textwidth}
  \centering
    \begin{tikzpicture}[scale=0.6,every node/.style={circle, draw}, node distance=2cm]
      \node (G) {G};
      \node (C) [left of=G] {C};
      \node (Y) [right of=G] {Y};
      \draw[->] (Y) -- (G);
      \draw[->] (C) -- (G);
    \end{tikzpicture}
    \caption{Causal graph for Example \ref{ex:diabetes_breakfast}}
    \label{fig:causal_graph_diabetes_breakfast}
    \end{subfigure}
    \begin{subfigure}[t]{0.5\textwidth}
    \includegraphics[width=1\linewidth]{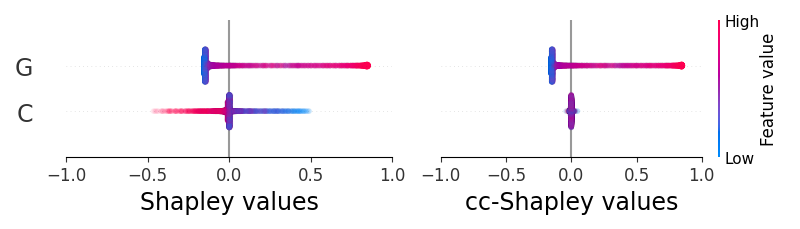}
    \caption{Shapley (left) and cc-Shapley values (right) for Ex. \ref{ex:diabetes_breakfast}}
    \label{fig:shapley_values_diabetes_breakfast}
    \end{subfigure}
    \caption{Causal graph for Example \ref{ex:diabetes_breakfast} (top), together with the according results for conventional Shapley values \eqref{eq:shapley_values} (bottom - left) and the cc-Shapley values \eqref{eq:shapley_values_do} (bottom - right). Experimental details are in Appendix \ref{app:details_on_diabetes_breakfast}.
    } 
    \label{fig:diabetes_breakfast_summary}
\end{figure}

 Why were we able to tell that a suppression effect rather than a potential health benefit of one carbohydrate-rich breakfast is a more likely interpretation for the negative relevance of $C$ in Figure \ref{fig:shapley_values_diabetes_breakfast}? We had to use something that a purely data-driven model does not have: an understanding of the real world, encoded in the causal diagram in Figure \ref{fig:causal_graph_diabetes_breakfast}. In this article we propose to use this understanding for our treatment of the context variables $\mathcal{S}$ in \eqref{eq:shapley_values} via techniques from \emph{causal inference} \citep{pearl2009causality}. This causal modification of \eqref{eq:shapley_values} is called \emph{cc-Shapley} values, and is introduced in Definition \ref{def:cc_Shapley}. For Example \ref{ex:diabetes_breakfast}, we depict the cc-Shapley values on the right side of Figure \ref{fig:shapley_values_diabetes_breakfast}. We observe that the cc-Shapley values do not attribute $C$ any importance for diabetes risk, which matches the intuition.

The main contributions of this article are as follows:
\begin{itemize}
    \item We highlight a fundamental problem underlying non-causal XAI methods such as \eqref{eq:shapley_values}. In causal inference, this problem is known as collider bias or suppression.
    \item We propose cc-Shapley values, which modify \eqref{eq:shapley_values} to use causal information. This is, to the best of our knowledge, the first approach designed to avoid collider bias without the need to restrict oneself to univariate feature importance.
    \item We present theoretical and experimental results regarding the computability and behavior of cc-Shapley values in the presence of collider bias. We use several synthetic and one real world example.
\end{itemize}

\paragraph{Comparison with the literature.} There are a plethora of XAI methods in the literature, cf. for example the reviews by \cite{angelov2021explainable,minh2022explainable} and \cite{mersha2024explainable}. While this work focuses entirely on Shapley values, the criticism raised here goes beyond this specific method. Typically, non-causal methods use only observational ingredients: the data and/or the model (which is also trained from the data). As causal information is not recoverable from observations alone in general and as collider bias is a causal phenomenon, it is implausible that any combination of purely observational information can eradicate the influence of collider bias.
Indeed, \cite{wilming2023theoretical} demonstrate the susceptibility of many popular XAI methods to suppression.

There is a growing awareness that the current approach to XAI needs scrutiny or even revision
\citep{freiesleben2023dear,haufe2024explainable,wilming2022scrutinizing} with some works discussing the need and possibility for combining XAI with causal concepts, e.g., \citep{carloni2025role,beckers2022causal,karimi2025position,holzinger2019causability,scholkopf2022causality}.
Various works propose using the underlying causal structure to produce counterfactual explanations, see, e.g., the works by \cite{karimi2021algorithmic,karimi2020algorithmic,konig2023improvement}. The focus in these works is mostly to produce counterfactuals that are coherent with the actual world, but the topic of suppression is not discussed in this context.

\cite{janzing2020feature,janzing2024quantifying,heskes2020causal,frye2020asymmetric} and \cite{jung2022measuring} also discuss variants of Shapley values with causal ingredients. We discuss the relation of our method to these variants in Section \ref{subsec:using_the_causal_context} and, in greater detail, in Appendix \ref{app:detailed_comparision_with_other_methods}.
\cite{wilming2023theoretical} study (conventional) Shapley values for a simple suppressor problem for which no relevance is attributed to a suppressor; this statement is however only true for their considered example and a specific approximation to Eq.~\eqref{eq:shapley_values}. \cite{haufe2014interpretation} and \cite{clark2025correcting} discuss the problem of suppression and observe that a univariate approach to feature importance avoids these problems. We will discuss the relation of our work to this idea in Section \ref{subsec:collider_bias} including a discussion why univariate importance is in general not sufficient to assess the relevance of a feature.

\section{Background}
\label{sec:background}

\subsection{Basics of causal inference}
\label{subsec:basics_of_causal_inference}
We here recall core concepts of causal inference needed in this article and refer to \cite{pearl2009causality}. In causal inference, the data genererating process behind a set of variables is described by a \emph{structural causal model (SCM)}, which is a triplet $\mathcal{M}=(\mathcal{V},(f_X)_{X\in \mathcal{V}},(U_X)_{X\in \mathcal{V}})$, where
\begin{itemize}
    \item $\mathcal{V}$ is the set of variables,
    \item $(U_X)_{X\in \mathcal{V}}$ is a set of independent\footnote{Throughout this work, we assume that the SCM satisfies the local Markov assumption, i.e., there are no hidden confounders outside the set $\mathcal{V}$.} random variables, representing noise, and
    \item each variable $X\in \mathcal{V}$ is related to its \emph{parents} $\mathrm{pa}_{X}\subseteq \mathcal{V}\backslash\{X\}$ and its noise variable $U_X$ via an \emph{assignment function} $X=f_X(\mathrm{pa}_X,U_X)$.
\end{itemize}
To avoid inconsistencies, the relations are assumed to be acyclic: That is, when drawing arrows from $\mathrm{pa}_X$ to $X$ for each $X\in \mathcal{V}$, we arrive at a directed acyclic graph $\mathcal{G}=(\mathcal{V},\mathcal{E})$ with edges $\mathcal{E}$. Given an instance of the noise variables $(U_X)_{X\in \mathcal{V}}$, we can find the values of all $X\in \mathcal{V}$ through evaluation of the assignment functions, starting from the source nodes without parents. The graph $\mathcal{G}$ is called the \emph{causal graph} and represents the causal functional dependencies underlying the data generating process. Figure \ref{fig:causal_graph_diabetes_breakfast} depicts the causal graph for Example \ref{ex:diabetes_breakfast}.

Statistical association between variables can be assessed by studying the paths in $\mathcal{G}$. A \emph{path} between two variables is a concatenation of adjacent edges from $\mathcal{E}$ (independent of the direction of the edges) leading from one variable to the other. If all edges in a path have the same direction, we denote it by $\ipath$ or $\pathi$ and call it a \emph{causal path}. If a causal path consists of only one edge we call it \emph{direct}, otherwise \emph{indirect}.
A path between two sets $\mathcal{S}$ and $\mathcal{S}'$ is called a \emph{backdoor path} if it leaves $\mathcal{S}$ with an incoming edge.
A path is called \emph{blocked}, conditional on a (possibly empty) set $\mathcal{Z} \subseteq \mathcal{V}$, if it 

\begin{itemize}
    \item contains a $Z\in \mathcal{Z}$ that acts on this path as a \emph{chain} $\rightarrow Z \rightarrow $ ( or $\leftarrow Z \leftarrow$) or as a \emph{fork} $\leftarrow Z \rightarrow$, or 
    \item if it contains a \emph{collider} $\rightarrow C \leftarrow $ which is neither contained in $\mathcal{Z}$ nor in the \emph{ancestors} $\ancestors(\mathcal{Z})$ of $\mathcal{Z}$ (the set of nodes that have a causal path leading to $\mathcal{Z}$).
\end{itemize}

A path is called \emph{unblocked} if it is not blocked. If all paths between two sets of variables $\mathcal{S}, \mathcal{S}' \subseteq\mathcal{V}$ are blocked conditional on $\mathcal{Z}$, we call them \emph{d-separated} and write $\mathcal{S} \ind_{\mathcal{G}} \mathcal{S}' | \mathcal{Z}$. d-separation implies independence: $\mathcal{S} \ind_{\mathcal{G}} \mathcal{S}' |\mathcal{Z} \Rightarrow \mathcal{S} \ind \mathcal{S}' | \mathcal{Z} $. The graph $\mathcal{G}$ is called \emph{faithful} if the reverse direction is always true. 

\emph{Intervention} is the process of creating a new SCM from $\mathcal{M}$ by setting a variable $X\in \mathcal{V}$ to a specific value $x$, replacing $f_X(\mathrm{pa}_X,U_X)$ by $\tilde{f}_X=x$. 
Intervention simulates the effect of a variable when we cut off its usual causes and corresponds to deleting all incoming arrows to $X$ in $\mathcal{G}$. We write $\doop(X=x)$ or simply $\doop(X)$ for this operation and $\mathcal{M}^{\doop(X)}$ for the modified SCM. Similarly, we can define an intervention $\doop(\mathcal{X})$ on a subset of variables $\mathcal{X}\subseteq \mathcal{V}$ or a stochastic intervention $\doop(\mathcal{X}\sim q)$ when drawing the values $\mathcal{X}$ from a distribution $q$ instead of using constant values. 

In supervised machine learning, the typical task is to estimate a target variable from other variables. To account for this distinction in our setup, we call one variable $Y\in \mathcal{V}$ the \emph{target} and the remaining variables $\mathcal{F} = \mathcal{V}\backslash\{Y\}$ \emph{features}. The features will be indexed as $\mathcal{F}=\{X_i: \,1\leq i \leq n\}$.

\subsection{Collider bias}
\label{subsec:collider_bias}
\paragraph{Suppressor variables and collider bias.}
In Figure \ref{fig:shapley_values_diabetes_breakfast}, we observed that even for simple problems, uninformative features, i.e., features that are independent of the target, can be attributed importance by XAI methods. This is because these features might be useful to remove variance from informative features. In the literature, variables that are useful in such a secondary sense are known as \emph{suppressor variables}, as they can be used to suppress noise \citep{horst1941role,conger1974revised,darlington1968multiple,kim2019causal}. These references all give different definitions of what a suppressor is, partially contradicting each other or giving definitions that only make sense in a linear framework. We here follow \cite{wilming2022scrutinizing} and \cite{haufe2024explainable} and define suppression through \emph{collider bias}, the statistical association that arises due to the unblocking of paths when conditioning on a collider or its ancestors (cf. Section \ref{subsec:basics_of_causal_inference}). We call a (possibly informative) feature a \emph{suppressor} if it is connected to the target via paths that contain colliders. In this view, suppression and collider bias are just two sides of the same coin: suppressors are those variables whose association with the target might be affected by collider bias when conditioning on other variables.
 Example \ref{ex:diabetes_breakfast} with the causal graph from Figure \ref{fig:causal_graph_diabetes_breakfast} is a very easy example on how the collider $G$ makes $C$ a suppressor. For general setups and a singleton $\mathcal{S}=\{X_k\}$ the middle column of Table \ref{tab:behavior_cond_and_do} summarizes how conditioning on $X_k$ can lead to collider bias between a feature $X_j$ and the target, making $X_j$ a suppressor. The rows of Table \ref{tab:behavior_cond_and_do} distinguish various types of paths between $X_j$ and $Y$. In cells marked in red, conditioning on $X_k$ will (potentially) lead to collider bias between $X_j$ and $Y$: previously blocked paths might become unblocked, inducing statistical associations along these paths. As the number of paths quickly grows with $|\mathcal{F}|$, the consequences of spurious association introduced by such effects is likely to become aggravated even for low-dimensional problems.

\paragraph{Univariate importance is insufficient.}

The problem of suppressor variables and the susceptibility of many XAI methods to them is known in the XAI literature \citep{wilming2022scrutinizing,haufe2024explainable,konig2025disentangling,weichwald2015causal}. \cite{haufe2014interpretation} observed that even the weights of linear models will highlight suppression variables as relevant. \cite{clark2025correcting} and \citet{gjolbye2025minimizing} extended this observation to generalized additive models and locally linear explanations. These three works also propose potential remedies for this problem, which could, in a nutshell, be summarized as (re-)fitting the considered feature $X_j$ to the target $Y$ in a univariate fashion\footnote{\cite{haufe2014interpretation} actually propose a method that, under the assumption of Gaussianity, is equivalent to a univariate linear regression from target to feature. The regression coefficient of such a fit equals, up to standard deviations, a linear univariate regression from feature to target.}. In the notation of \eqref{eq:obs_importance}, we can express such an approach as 
\begin{align}
    \label{eq:univariate_quantitiy}
    I_{\emptyset}(X_j)=\EE[Y|X_j] - \EE[Y] \,,
\end{align}
where we recall that the conditional expectation is the optimal fit of $X_j$ onto $Y$, cf. Appendix \ref{app:conditional_expecation_as_optimal_model}. The object \eqref{eq:univariate_quantitiy} is not susceptible to collider bias and thus to suppression as it does not condition on other features. In particular, we have 
 $X_j \ind Y \Rightarrow I_{\emptyset}(X_j)=0$ so that \eqref{eq:univariate_quantitiy} only highlights features that are statistically associated with $Y$, a property called the \emph{statistical association property} (SAP) by \cite{clark2025correcting,wilming2023theoretical} and \cite{haufe2024explainable}. For Example \ref{ex:diabetes_breakfast}, the SAP guarantees that $I_{\emptyset}(C)=0$.

Univariate importance measures such as \eqref{eq:univariate_quantitiy} avoid attributing importance to statistically irrelevant features which is necessary to apply XAI for model debugging and scientific discovery. However, the interplay between variables is vital to the performance of a ML model and interactions of variables can create information that is not available by looking at each variable on its own as shown by
\begin{example}[Univariate importance is not enough]
\label{ex:univariate_importance_not_enough}
Let $X_1,X_2\sim \mathrm{Bernoulli}(\frac{1}{2})$ be independent and $Y:=X_1 \cdot X_2$. Then $I_{\emptyset}(X_1)=I_{\emptyset}(X_2)=0$ indicate no importance of the features whereas the ``higher order'' importance terms $I_{X_2}(X_1)=X_1X_2-\frac{1}{2}X_2,\,I_{X_1}(X_2)=X_1X_2-\frac{1}{2}X_1 $ (correctly) indicate importance.
\end{example}
A more complete XAI method will thus have to consider the multivariate interplay between variables. As we saw above, doing this purely based on observational data as done by mainstream approaches can lead to spurious association that can easily be misinterpreted. In the following section we demonstrate that using causal information provides a way out of this dilemma.
\begin{table}[t]
\small{
\begin{center}
\renewcommand*{\arraystretch}{1.25}
\begin{tabular}{m{3.75cm}cc}
\toprule
 Paths $\tau$ between $X_j$ and $Y$ & \makecell{$\ldots | X_k$} & \makecell{$\doop(X_k)$} \\
\midrule
$\rightarrow X_k \rightarrow $ exists on $\tau$ & \potblocked $\Rightarrow$ \blocked & \potblocked $\Rightarrow$ \blocked \\
$\leftarrow X_k \rightarrow $ exists on $\tau$ & \potblocked $\Rightarrow$ \blocked  & \potblocked $\Rightarrow$ \blocked  \\
$\rightarrow X_k \leftarrow $ exists on $\tau$ & \cellcolor{pink} \blocked \hspace{0.15em}$\Rightarrow$ \potblocked  & \blocked \hspace{0.05em} $\Rightarrow$ \blocked \\
$\rightarrow C \leftarrow$ exists on $\tau$ and $X_k\notin \tau,C\in \ancestors(X_k)$ & \cellcolor{pink} \blocked \hspace{0.15em}$\Rightarrow$ \potblocked & \blocked \hspace{0.0em} $\Rightarrow$ \blocked \\
None of the above & no effect & no effect \\
\bottomrule
\end{tabular}
\end{center}}
\caption{Change in the state of being blocked (\blocked) or unblocked (\unblocked) for paths between feature $X_j$ and target $Y$ when conditioning (middle column) or intervening (right column) on $X_k$. "\potblocked" means that the path is potentially unblocked - depending on other path details.
Conditioning on $X_k$ potentially unblocks previously blocked paths, which can lead to \emph{collider bias} (cells marked in red). This is avoided by causal interventions $\doop(X_k)$.} \label{tab:behavior_cond_and_do}
\end{table}

\section{Methodology}
\label{sec:methodology}
\subsection{Using the causal context}
\label{subsec:using_the_causal_context}

\paragraph{A causal version of Shapley values.}%
We saw in Section \ref{sec:introduction} for our running Example \ref{ex:diabetes_breakfast} that, even in this simple case, objects such as $\EE[Y|C,G]$ are susceptible to the problem of spurious correlations, namely between $C$ and $Y$ conditional on $G$.
We here propose to solve problems of this kind as described in the following.

First, abandoning the symmetry between the considered features, we propose to distinguish in each scenario between 
\begin{enumerate}
\item The variable whose importance is to be studied. 
\item The variable(s) that describe the background within which the importance of the variable from 1 is studied. We call these variables the \emph{context}.
\end{enumerate}
The ``paradoxical'' behavior for Example \ref{ex:diabetes_breakfast} arose when studying the importance of the feature $C$ (carbohydrate intake) in the context of a glucose value of $G$. The quantity $\EE[Y|C,G]$ considers the collider $G$ as an observation, which leads to the spurious correlation between the suppressor $C$ and $Y$. We here propose to apply \emph{interventions} on the context variable(s) instead of conditioning on observed values. 

\begin{definition}
\label{def:cc_Shapley}
We define the \emph{importance of a feature $X_j$ in the interventional context of features $\mathcal{S}\subseteq \mathcal{F}\backslash\{X_j\}$} as 
\begin{align}
    \label{eq:importance_do}
    I_{\doop(\mathcal{S})}(X_j)= \EE[Y|X_j,\doop(\mathcal{S})] - \EE[Y|\doop(\mathcal{S})] \,.
\end{align}
Using \eqref{eq:importance_do} we define causal context Shapley (\emph{cc-Shapley}) values as
\begin{align}
    \label{eq:shapley_values_do}
    \phi_{cc}(X_j) = \!\!\!\!\sum_{\mathcal{S}\subseteq \mathcal{F}\backslash\{X_j\}} \!\!\!\! {\small \frac{|\mathcal{S}|! (|\mathcal{F}|-|\mathcal{S}|-1)!}{|\mathcal{F}|!}} I_{\doop(\mathcal{S})}(X_j)\,.
\end{align}

\end{definition}
Intervening on the context allows us to remove collider bias: For the case of a univariate context $\mathcal{S}=\{X_k\}$, the rightmost column of Table \ref{tab:behavior_cond_and_do} summarizes the effect of the intervention $\doop(X_k)$ on the blocking of paths between $X_j$ and $Y$. In contrast to a mere conditioning on $X_k$ (middle column), the intervention $\doop(X_k)$ does not unblock previously blocked paths. 
In fact, the same holds for general context variables $\mathcal{S}$:
Since intervention removes incoming arrows to $\mathcal{S}$ in $\mathcal{G}$, the operation $\doop(\mathcal{S})$ does not lead to a conditioning on colliders. 

For the running Example \ref{ex:diabetes_breakfast}, we obtain, from Lemmas \ref{lem:irrelevant_context}, \ref{lem:observation_equals_intervention} below and Appendix \ref{app:details_on_diabetes_breakfast}, $I_{\doop(C)}(G)=I_{C}(G)\simeq \sigma(\frac{2}{5}(G-0.4C-109))-0.15$ and $I_{\doop(G)}(C)=I_{\emptyset}(C)=0$. Note that the information on the suppressor $C$ is still used but now counts towards the importance of the informative variable $G$ it suppresses. The uninformative suppressor feature $C$ is assigned no importance. This can be shown to hold in general for any non-informative feature whenever the underlying causal graph is faithful. Thus, the cc-Shapley values as defined in Definition \ref{def:cc_Shapley} fulfill the SAP.

\begin{proposition}[SAP for Definition \ref{def:cc_Shapley}]
\label{prop:sap_I_do}
Given a feature $X_j$ with $X_j \ind_{\mathcal{G}} Y$ we have $I_{\doop(\mathcal{S})}(X_j)=0$ for any $\mathcal{S}\subseteq \mathcal{F}\backslash\{X_j\}$. This implies that we have $X_j \ind_{\mathcal{G}} Y \Rightarrow \phi_{cc}(X_j)=0$ for $\phi_{cc}$ as in \eqref{eq:shapley_values_do}. 
\end{proposition}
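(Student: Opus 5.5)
The plan is to reduce the statement to a d-separation statement in the mutilated graph $\mathcal{G}_{\overline{\mathcal{S}}}$. This is the graph obtained from $\mathcal{G}$ by deleting all edges pointing into $\mathcal{S}$, and it is the causal graph of $\mathcal{M}^{\doop(\mathcal{S})}$. The quantity $\EE[Y|X_j,\doop(\mathcal{S})]$ is the conditional expectation of $Y$ given $X_j$ under the distribution induced by $\mathcal{M}^{\doop(\mathcal{S})}$. If we show $X_j \ind Y$ in that distribution, then $\EE[Y|X_j,\doop(\mathcal{S})]=\EE[Y|\doop(\mathcal{S})]$, so $I_{\doop(\mathcal{S})}(X_j)=0$. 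Since $\mathcal{M}^{\doop(\mathcal{S})}$ is again an SCM with independent noise (the intervened variables become constants or independent draws from $q$), d-separation implies independence in it. It therefore suffices to show $X_j \ind_{\mathcal{G}_{\overline{\mathcal{S}}}} Y$ with empty conditioning set.

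For the key step, take any path $\tau$ between $X_j$ and $Y$ in $\mathcal{G}_{\overline{\mathcal{S}}}$. Every edge of $\mathcal{G}_{\overline{\mathcal{S}}}$ is also an edge of $\mathcal{G}$ with the same orientation, so $\tau$ is also a path in $\mathcal{G}$. The assumption $X_j \ind_{\mathcal{G}} Y$ with empty conditioning set means $\tau$ is blocked in $\mathcal{G}$ given $\emptyset$. With an empty conditioning set the chain/fork clause cannot apply, so $\tau$ must contain a collider $\rightarrow C \leftarrow$ (here $\ancestors(\emptyset)=\emptyset$). Colliders are determined only by the orientations of the two edges of $\tau$ meeting at $C$, and these are unchanged. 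Hence $C$ is still a collider on $\tau$ in $\mathcal{G}_{\overline{\mathcal{S}}}$, and it is still not in the empty conditioning set, so $\tau$ remains blocked. This is the content of the right column of Table \ref{tab:behavior_cond_and_do}: intervention never unblocks paths. Thus $X_j \ind_{\mathcal{G}_{\overline{\mathcal{S}}}} Y$.

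One should also handle the possibility that intervening creates new dependence through the intervened values themselves. In $\mathcal{G}_{\overline{\mathcal{S}}}$ the nodes in $\mathcal{S}$ are sources with fixed (or independently drawn) values. They therefore enter only as exogenous inputs, so the Markov property of $\mathcal{M}^{\doop(\mathcal{S})}$ still applies. For deterministic $\doop(\mathcal{S}=s)$ this gives independence for each $s$. For stochastic interventions $\doop(\mathcal{S}\sim q)$ with $q$ drawn independently, the same argument applies because $q$ just plays the role of new independent noise.

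Finally, $\phi_{cc}(X_j)$ is a weighted sum of the terms $I_{\doop(\mathcal{S})}(X_j)$ over $\mathcal{S}\subseteq\mathcal{F}\backslash\{X_j\}$, and each term vanishes, so $\phi_{cc}(X_j)=0$. The main obstacle I expect is the measure-theoretic bookkeeping in the first step. One has to make precise that $\EE[\,\cdot\,|X_j,\doop(\mathcal{S})]$ is the conditional expectation in the intervened model and that the local Markov property carries over to $\mathcal{M}^{\doop(\mathcal{S})}$. The graphical argument itself is straightforward.
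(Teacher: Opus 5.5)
Your proposal is correct and is essentially the paper's argument (intervening on $\mathcal{S}$ only deletes edges, so every path between $X_j$ and $Y$ keeps its collider, d-separation persists in $\mathcal{G}_{\overline{\mathcal{S}}}$, and independence follows in $\mathcal{M}^{\doop(\mathcal{S}=s)}$ for each $s$), just written out in more detail than the paper's one-line proof. The only loose spot is the stochastic-intervention remark: there you need $X_j \ind Y \mid \mathcal{S}$ in $\mathcal{M}^{\doop(\mathcal{S}\sim q)}$, not marginal independence, but this also holds because nodes of $\mathcal{S}$ have no parents in $\mathcal{G}_{\overline{\mathcal{S}}}$, so no collider can lie in $\mathcal{S}\cup\ancestors(\mathcal{S})$ and conditioning on $\mathcal{S}$ unblocks nothing (and your per-$s$ argument already covers the definition anyway).
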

\begin{proof}
This follows from the fact that an intervention on $\mathcal{S}$ cannot unblock paths between $X_j$ and $Y$, hence $X_j\ind_{\mathcal{G}} Y |\,\doop(\mathcal{S})$ and thus $X_j \ind  Y |\,\doop(\mathcal{S})$. 
\end{proof}

\paragraph{Comparison with other methods.}
\cite{jung2022measuring} introduce \emph{$\doop$-Shapley} via the formula
\begin{align*}
    \phi_{\doop}(X_j) =\!\!\!\! \sum_{\mathcal{S}\subseteq \mathcal{F}\backslash\{X_j\}} \!\!\!\!\!\!\! \gamma(\mathcal{S}) (\EE[Y|\doop(X_j,\mathcal{S})]-\EE[Y|\doop(X_j)]\,.
\end{align*}
This expression appears similar to the one proposed for $\phi_{cc}$ above, with the exception that we do not intervene on $X_j$ in Definition \ref{def:cc_Shapley}. This difference is however crucial for the aspects studied in this work. Broadly speaking, prediction problems can often be classified as causal and anti-causal, depending on whether the target is the effect or the cause of the features \citep{scholkopf2012oncausal}. While the issue of suppression is not exclusive to the anti-causal setup, it most naturally occurs in such settings due to its connection to collider bias. In fact, almost all causal settings studied in this work (except for Example \ref{ex:univariate_importance_not_enough}) contain at least some anti-causal elements. The object $\phi_{\doop}$ is not designed for anti-causal settings and doesn't attribute relevance to any feature when used in this context (cf. Appendix \ref{app:detailed_comparision_with_other_methods}). The same is true for the \emph{intrinsic causal contribution} approach proposed by \cite{janzing2024quantifying}. \cite{heskes2020causal} propose \emph{causal Shapley values} that differ from do-Shapley via using the actual model instead of $Y$. We find in Appendix \ref{app:detailed_comparision_with_other_methods} that this leads for Example \ref{ex:diabetes_breakfast} once more to spurious negative relevance attributed to $C$. \cite{frye2020asymmetric} also use the model for their concept of \emph{asymmetric Shapley values}. We find in Appendix \ref{app:detailed_comparision_with_other_methods} that their approach can, however, be modified to a model agnostic form and that this form is, similar to cc-Shapley, well-behaved on Example \ref{ex:diabetes_breakfast}. In the presence of mediators this modification shows however inconsistencies in its behavior on causal and anti-causal problems in contrast to cc-Shapley. For a more detailed discussion we refer to Section \ref{app:detailed_comparision_with_other_methods} of the appendix.


\subsection{Estimation}
\label{subsec:estimation}

\paragraph{Simple cases.}
Objects such as $\EE[Y|\mathcal{S}]$ and $\EE[Y|X_j,\mathcal{S}]$ can rather easily be learned from data via training a machine learning model of choice that predicts $Y$ from $\mathcal{S}$ or $\{X_j\}\cup\mathcal{S}$, cf. Lemma \ref{lem:conditional_expectation_as_optimal_model} in the appendix. Interventional objects such as $\EE[Y|X_j,\doop(\mathcal{S})]$ from Definition \ref{def:cc_Shapley} are harder to obtain. But in some cases they match observational quantities as shown by the following two lemmas whose proofs are in Appendix \ref{app:proofs_of_lemmas}.

\begin{lemma}[Irrelevant context]
\label{lem:irrelevant_context}
Consider $X_j\in \mathcal{F}, \mathcal{S}\subseteq \mathcal{F}\backslash\{X_j\}$ and assume that there are no causal paths $\mathcal{S} \ipath Y, X_j$. Then we have $\EE[Y|X_j, \doop(\mathcal{S})]=\EE[Y|X_j]$ and $I_{\doop(\mathcal{S})}(X_j)=I_{\emptyset}(X_j)$.  
\end{lemma}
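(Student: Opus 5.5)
The plan is to show that intervening on $\mathcal{S}$ leaves the joint distribution of $(X_j,Y)$ unchanged when no causal path runs from $\mathcal{S}$ to $Y$ or to $X_j$. I will argue at the level of the SCM $\mathcal{M}$ and its modification $\mathcal{M}^{\doop(\mathcal{S})}$. Let $D$ denote the set of descendants of $\mathcal{S}$ in $\mathcal{G}$, that is, $\mathcal{S}$ together with all nodes reachable from $\mathcal{S}$ by a causal path. By assumption $Y\notin D$ and $X_j\notin D$. Since $X_j\notin\mathcal{S}$ and $Y\notin\mathcal{S}$, this also covers the trivial "length zero" case.

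First I will establish the key step: for every variable $V\notin D$, the assignment $V=f_V(\mathrm{pa}_V,U_V)$ is literally the same in $\mathcal{M}$ and $\mathcal{M}^{\doop(\mathcal{S})}$. Moreover, all ancestors of $V$ lie outside $D$, because an ancestor in $D$ would make $V$ a descendant of $\mathcal{S}$. I will then argue by induction along a topological order of $\mathcal{G}$ restricted to $\mathcal{V}\setminus D$. Each such $V$ is computed from the same noise $U_V$ and the same function of parents, and those parents also lie outside $D$. Hence, for every realization of the noise $(U_X)_{X\in\mathcal{V}}$, the value of $V$ coincides in both models. In particular $(X_j,Y)$ is the same measurable function of the noise in both models, so the joint law of $(X_j,Y)$ under $\doop(\mathcal{S})$, for any fixed value or stochastic intervention $q$, equals its observational joint law.

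From equality of joint laws, I will conclude
\[
\EE[Y|X_j,\doop(\mathcal{S})]=\EE[Y|X_j]
\]
as functions of $X_j$ (almost surely). The same argument applied to $Y$ alone gives $\EE[Y|\doop(\mathcal{S})]=\EE[Y]$. Subtracting the two identities and using \eqref{eq:importance_do} and \eqref{eq:univariate_quantitiy} yields $I_{\doop(\mathcal{S})}(X_j)=I_{\emptyset}(X_j)$.

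The main obstacle is the precise reading of $\EE[Y|X_j,\doop(\mathcal{S})]$ when $\mathcal{S}$ is intervened on with a random or fixed value. Since the paper speaks of $\EE[Y|X_j,\doop(\mathcal{S})]$ as a function of $X_j$ (and of the intervened values), I will state explicitly that the result holds for every intervention value. Null-set issues are handled by the pathwise equality of $(X_j,Y)$ in the two SCMs, so the conditional distributions, and not just expectations, agree.
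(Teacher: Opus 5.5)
Your proof is correct, but it takes a different route from the paper's.

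The paper invokes rule 3 of do-calculus. It verifies the d-separation $Y \ind_{\mathcal{G}_{\overline{\mathcal{S}}}} \mathcal{S} \mid X_j$ by considering the last $\mathcal{S}$-node $X_k$ on a putative unblocked path. That path must leave $X_k$ along an outgoing edge, so it must contain a collider in $\{X_j\}\cup\ancestors(X_j)$, and the first such collider would give a causal path $X_k \ipath X_j$, which is excluded. The identity $\EE[Y|\doop(\mathcal{S})]=\EE[Y]$ is then handled separately.

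You instead work directly with the structural equations: an intervention on $\mathcal{S}$ cannot change the value of any non-descendant of $\mathcal{S}$ for any noise realization. This is more elementary, and it yields more:
\begin{itemize}
\item pathwise invariance of every non-descendant, hence of the joint law of $(X_j,Y)$, which gives both identities at once;
\item uniform coverage of fixed-value and stochastic interventions;
\item no use of the independence of the noise variables, which the paper's route needs so that d-separation in $\mathcal{G}$ implies the required conditional independence.
\end{itemize}

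The paper's route stays within the do-calculus machinery also used for Lemma \ref{lem:observation_equals_intervention} and Lemma \ref{lem:backdoor_adjustment}. Its d-separation template also extends to cases your argument cannot reach. For example, with $X_k \rightarrow X_j \rightarrow Y$, the variable $X_j$ is a descendant of $\mathcal{S}=\{X_k\}$, yet rule 3 still gives $\EE[Y|X_j,\doop(X_k)]=\EE[Y|X_j]$ because the only connecting path is blocked given $X_j$.
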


For the running Example \ref{ex:diabetes_breakfast}, Lemma \ref{lem:irrelevant_context} implies that $I_{\doop(G)}(C)=I_{\emptyset}(C)=0$ as there are no causal paths $G\ipath Y$ or $G\ipath C$ and hence $G$ is irrelevant as context for $C$.

\begin{lemma}[Intervention equals observation]
\label{lem:observation_equals_intervention}
Consider $X_j\in \mathcal{F},\,\mathcal{S}\subseteq \mathcal{F}\backslash\{X_j\}$ such that, either
\begin{itemize}
    \item \emph{(no backdoor paths)} there are no unblocked backdoor paths from $\mathcal{S}$ to $X_j$ or from $\mathcal{S}$ to $Y$, or
    \item \emph{(purely causal setup)} there are no causal paths $Y \ipath X_j,\mathcal{S}$ and no confounders $H \in \mathcal{V}\backslash \left(\{Y,X_j\}\cup\mathcal{S}\right)$ with $X_j \pathi H \ipath Y$ or $\mathcal{S} \pathi H \ipath Y$,
\end{itemize}

then we have the identities $\EE[Y|X_j,\doop(\mathcal{S})] = \EE[Y|X_j,\mathcal{S}]$ and $I_{\doop(\mathcal{S})}(X_j)=I_{\mathcal{S}}(X_j)$.
\end{lemma}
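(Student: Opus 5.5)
The plan is to reduce both bullet conditions to Pearl's second rule of do-calculus (action/observation exchange). That rule says $P(Y|X_j,\doop(\mathcal{S})) = P(Y|X_j,\mathcal{S})$ whenever
\begin{align*}
Y \ind_{\mathcal{G}_{\underline{\mathcal{S}}}} \mathcal{S} \,|\, X_j ,
\end{align*}
where $\mathcal{G}_{\underline{\mathcal{S}}}$ is $\mathcal{G}$ with all edges \emph{outgoing} from $\mathcal{S}$ removed. If this holds, taking expectations gives the first identity. Applying the same argument with $X_j$ dropped gives $\EE[Y|\doop(\mathcal{S})]=\EE[Y|\mathcal{S}]$, whose condition is $Y \ind_{\mathcal{G}_{\underline{\mathcal{S}}}} \mathcal{S}$. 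Subtracting the two identities yields $I_{\doop(\mathcal{S})}(X_j)=I_{\mathcal{S}}(X_j)$.

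Equivalently, I can argue without do-calculus. Use the truncated factorization for $P^{\doop(\mathcal{S})}$, then show that under either hypothesis the joint density of $(Y,X_j)$ given $\mathcal{S}$ is unchanged. So the whole task is to verify the two d-separation statements in $\mathcal{G}_{\underline{\mathcal{S}}}$. The paths between $\mathcal{S}$ and $Y$ surviving in $\mathcal{G}_{\underline{\mathcal{S}}}$ are exactly the backdoor paths, those leaving $\mathcal{S}$ with an incoming edge.

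\emph{Case 1 (no backdoor paths).} Every backdoor path from $\mathcal{S}$ to $Y$ is blocked by hypothesis, which settles the unconditional statement. For the statement conditional on $X_j$, take a backdoor path $\mathcal{S}\leftarrow\cdots Y$ that might be unblocked by conditioning on $X_j$. Such a path must contain a collider $C$ with $C\in\{X_j\}\cup\ancestors(X_j)$. Cut it at the first such collider. Then either the initial segment, extended along a causal path $C \ipath X_j$, gives a backdoor path from $\mathcal{S}$ to $X_j$, or the trailing segment together with the hypothesis gives a contradiction.

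This stitching is the main obstacle. The concatenated walk may revisit nodes, and blocking along a walk does not transfer directly to blocking of a path. I would handle it with the standard trick of choosing a shortest offending path, or by reading ``no unblocked backdoor paths'' as already meant conditionally on $X_j$ for $Y$, which the statement seems to intend. If needed, I would interpret the hypothesis in the latter sense, which matches the backdoor criterion with adjustment set $X_j$.

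\emph{Case 2 (purely causal setup).} Here I argue that every path from $\mathcal{S}$ into $Y$ starting with an edge into $\mathcal{S}$ is blocked given $X_j$.

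First suppose the path contains no collider. Then it has the form $\mathcal{S}\pathi H \ipath Y$ with a source node $H$. Either $H\in\mathcal{S}\cup\{X_j\}$, or $H$ is a confounder, which is excluded. If $H$ lies in $\mathcal{S}$, the path already leaves $\mathcal{S}$ forward from $H$, so I restrict to the last node of $\mathcal{S}$ on the path. If $H=X_j$, the path is blocked since $X_j$ acts as a fork and is conditioned on. A path ending $\leftarrow Y$ would require a causal path $Y\ipath$ somewhere, which is also excluded.

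Next suppose the path contains colliders. Each collider must be in $\ancestors(X_j)\cup\{X_j\}$ to unblock. Consider the segment between the last collider and $Y$: it is collider-free and ends at $Y$. If it ended $\leftarrow Y$ this would give $Y \ipath X_j$ through the collider, which is excluded. Otherwise it is of the form $C\pathi H\ipath Y$, and $H$ being a confounder of $X_j$ and $Y$ (via $H\ipath C\ipath X_j$) is excluded unless $H=X_j$, which blocks. The unconditional statement is the simpler subcase of this argument.
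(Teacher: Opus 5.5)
\textbf{Overall route.} Your route is the paper's. You apply rule~2 of do-calculus, reduce the problem to $Y\ind_{\mathcal{G}_{\underline{\mathcal{S}}}}\mathcal{S}\,|\,X_j$ and $Y\ind_{\mathcal{G}_{\underline{\mathcal{S}}}}\mathcal{S}$, and use a last-collider argument in the purely causal case. Your conditional argument in Case~2 is correct.

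\textbf{Case~1.} You should not reinterpret the hypothesis conditionally, since that changes the lemma. You also do not need to. In $\mathcal{G}_{\underline{\mathcal{S}}}$ every path starting in $\mathcal{S}$ begins with an edge into $\mathcal{S}$, and unconditional blocking depends only on colliders. So the hypothesis gives $\{X_j,Y\}\ind_{\mathcal{G}_{\underline{\mathcal{S}}}}\mathcal{S}$ directly. Rule~2 applied to the pair then yields $P(X_j,Y|\doop(\mathcal{S}))=P(X_j,Y|\mathcal{S})$, which is your ``without do-calculus'' variant, and both identities follow with no stitching. If you prefer your collider argument, note that $S\pathi H\ipath C\ipath X_j$ is a trek. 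Cut it at the node of $H\ipath S$ closest to $S$ that also lies on $H\ipath X_j$; this gives a simple, collider-free backdoor path from $S$ to $X_j$, contradicting the hypothesis.

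\textbf{Case~2: the gap.} The genuine gap is your last sentence, ``the unconditional statement is the simpler subcase.'' It is not. You dispose of the source $H=X_j$ only because $X_j$ is conditioned on. In $Y\ind_{\mathcal{G}_{\underline{\mathcal{S}}}}\mathcal{S}$ nothing is conditioned on, and the hypothesis excludes only confounders $H\notin\{Y,X_j\}\cup\mathcal{S}$. So the trek $\mathcal{S}\pathi X_j\ipath Y$ is permitted and open, and the claimed identity is in fact false.

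\textbf{Counterexample.} Take $X_j\to S$ and $X_j\to Y$, with $X_j=U_1$, $S=X_j+U_2$, $Y=X_j+U_3$ (standard normal noise) and $\mathcal{S}=\{S\}$.
\begin{itemize}
\item The purely causal hypothesis holds: $Y$ is a sink and there is no candidate $H$.
\item The first identity holds: $\EE[Y|X_j,\doop(S)]=X_j=\EE[Y|X_j,S]$.
\item The marginal identity fails: $\EE[Y|\doop(S)]=\EE[Y]=0$, while $\EE[Y|S]=S/2$.
\item Hence $I_{\doop(S)}(X_j)=X_j\neq X_j-S/2=I_S(X_j)$.
\end{itemize}
The same example refutes your claim that the joint law of $(X_j,Y)$ given $\mathcal{S}$ is unchanged in Case~2.

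\textbf{Consequence.} Under the purely causal hypothesis only $\EE[Y|X_j,\doop(\mathcal{S})]=\EE[Y|X_j,\mathcal{S}]$ can be proved. The $I$-identity needs an extra assumption, e.g.\ also excluding $\mathcal{S}\pathi X_j\ipath Y$; with that, your argument goes through. The paper's proof contains the same slip: it asserts that every unconditionally unblocked backdoor path is $\mathcal{S}\pathi Y$ or $\mathcal{S}\pathi H\ipath Y$ with $H$ excluded, overlooking $H=X_j$.
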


The feature $C$ in Example \ref{ex:diabetes_breakfast} has no backdoor paths so that Lemma \ref{lem:observation_equals_intervention} implies $I_{\doop(C)}(G)=I_C(G)$. The same argument implies that for Example \ref{ex:univariate_importance_not_enough} we have $I_{\doop(X_2)}(X_1)=I_{X_2}(X_1)$ and $I_{\doop(X_1)}(X_2)=I_{X_1}(X_2)$ and thus $\phi_{cc}(X_1)=\phi(X_2)\neq 0,\phi_{cc}(X_2)=\phi(X_2)\neq 0$.

\paragraph{Backdoor adjustment.} 
In Lemma \ref{lem:backdoor_adjustment} in the appendix we provide a formula for the identification of $I_{\doop(\mathcal{S})}(X_j)$ from observational data similar to the backdoor adjustment from \cite{pearl2009causality}.

\begin{algorithm}[t]
\caption{Compute $I_{\doop(\mathcal{S})}(X_j)$ from SCM}
\label{alg:compute_importance}
\KwIn{SCM $\mathcal{M}$, $X_j \in \mathcal{F}$,
      context $\mathcal{S} \subseteq \mathcal{F}\backslash\{X_j\}$}
\KwOut{$I_{\doop(\mathcal{S})}(X_j)$}
\vspace{0.4em}  
\Comment{Create modified model}
Create sampler of marginal $q(\mathcal{S})$ in $\mathcal{M}$\;
Construct $\mathcal{M}^{\doop(\mathcal{S}\sim q)}$\;

\vspace{0.3em} 
\Comment{Fit ML models (NN, XGBoost,...)}
Sample $(X_j,\mathcal{S},Y)$ from $\mathcal{M}^{\doop(\mathcal{S}\sim q)}$\;
$\EE[Y|\doop(\mathcal{S})] \gets $ fit $\mathcal{S}$ to $Y$\;
$\EE[Y|X_j,\doop(\mathcal{S})] \gets $ fit $X_j, \mathcal{S}$ to $Y$\;

\vspace{0.3em}  
\Comment{Compute importance}
$I_{\doop(\mathcal{S})}(X_j) \gets \EE[Y|X_j,\doop(\mathcal{S})]- \EE[Y|\doop(\mathcal{S})]$ \;
\vspace{0.4em} 
\Return $\EE[Y|X_j,\doop(\mathcal{S})]$, $I_{\doop(\mathcal{S})}(X_j)$
\end{algorithm}
\paragraph{Using the SCM.}
Algorithm \ref{alg:compute_importance} summarizes how $I_{\doop(\mathcal{S})}(X_j)$ can be computed given an SCM $\mathcal{M}$. This is the method that we use in Section \ref{sec:experimental_results} below. We first isolate the joint marginal $q$ of the context variables $\mathcal{S}$ within the original model $\mathcal{M}$. We then perform a stochastic intervention on the SCM to obtain $\mathcal{M}^{\doop(\mathcal{S}\sim q)}$. In this modified model, we can obtain estimates of the functions $\EE[Y|X_j, \doop(\mathcal{S})]$ and $\EE[Y|\doop(\mathcal{S})]$ via standard multivariate fitting using data-driven models, cf. Appendix \ref{app:justification_of_algorithm}.

Unless one is working with synthetic examples, the SCM will naturally not be known, as we also discuss in Section \ref{sec:limitations} below. 
 However, given a causal graph and a defined noise structure, the functional relations can be estimated from observational data by regressing each feature on its parents \citep{peters2017elements}. We apply this for the real world example in Section \ref{sec:experimental_results} below.

\subsection{Limitations}
\label{sec:limitations}
Neither the original Shapley values \eqref{eq:shapley_values} nor our causal analogue \eqref{eq:shapley_values_do} are scalable in the way they are formulated here. The number of of context sets $\mathcal{S}$ increases swiftly with $|\mathcal{F}|$ and for each feature $X_j$ and context $\mathcal{S}$ two data-driven models have to be fitted to estimate $\EE[Y|X_j, \doop(\mathcal{S})]$ and $\EE[Y|\doop(\mathcal{S})]$. Scalable approximations \citep{chen2023algorithms,parafita2025practical,teal2026exactly} could be a way out but were not studied in the scope of this work. The main focus of this article is to highlight a systematic weakness in the existing approach to XAI and to point out what is necessary to fix it. 

Another limitation that is shared by many other works on causality, is the assumption that the causal graph that has generated the data is known. Once we have obtained this graph, we can often fit the functions $(f_X)_{X\in \mathcal{V}}$ in the SCM and subsequently apply Algorithm \ref{alg:compute_importance}. This is the procedure that we follow for the real world example in Section \ref{sec:experimental_results} below. 
In specific cases, such as linear SCMs with non-Gaussian noise, algorithms such as LiNGAM \citep{shimizu2011directlingam} can be used, cf. Section \ref{sec:experimental_results} below, to obtain the causal graph. For more general approaches to causal discovery compare also the work of \cite{zheng2018dags} and the reviews by \cite{zanga2022survey,glymour2019review}. In the general case, however, obtaining a valid causal graph is typically non-trivial and often requires expert knowledge. Moreover, in accordance with many other works in causal inference, we have to assume that the exogenous variables in our model are uncorrelated, which is typically an oversimplification.

Finally, the usage of a causal graph assumes that a variable has a specific static meaning throughout the dataset, which is typically not true for other types of data such as images. Here causal representation learning becomes necessary to apply causal techniques \citep{scholkopf2021toward,brehmer2022weakly,ahuja2023interventional}. We did not use such techniques here and only restrict ourselves to static data.

\section{Experimental results}
\label{sec:experimental_results}
\begin{figure}
    \centering
  \begin{subfigure}[t]{0.235\textwidth}
    \includegraphics[width=\linewidth]{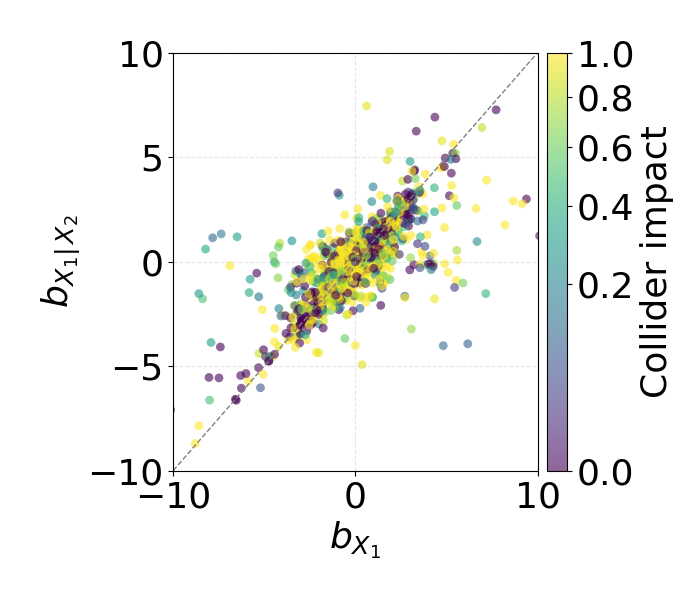}
  \end{subfigure}
  \begin{subfigure}[t]{0.235\textwidth}
    \includegraphics[width=\linewidth]{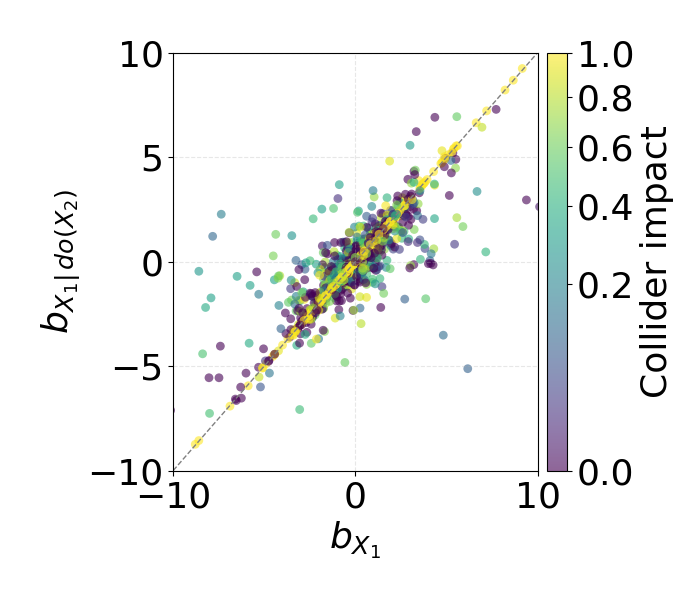}
   \end{subfigure}
    \caption{Comparison of the regression coefficients $b_{X_1|X_2}$ (left) and $b_{X_1|\doop(X_2)}$ (right) with $b_{X_1}$ for randomly sampled linear SCMs with 9 variables. The color encodes the extent to which $X_2$ acts as collider, cf. Appendix \ref{app:details_on_linear_SCM_experiment}.}
    \label{fig:linear_scm_experiment}
\end{figure}

\paragraph{Linear SCMs.}
We randomly sample 3,000 linear SCMs with 8 features, i.e., $\mathcal{F}=\{X_1,\ldots,X_8\},\,\mathcal{V}=\mathcal{F}\cup \{Y\}$, and additive non-Gaussian (Laplacian) noise as described in Appendix \ref{app:details_on_linear_SCM_experiment}. For each SCM, the effect of including univariate context $\mathcal{S}=\{X_2\}$ on the importance of the variable $X_1$ is studied. 
Using linear models for the fitting of conditional expectations and for Algorithm \ref{alg:compute_importance}, we obtain partial regression coefficients $b_{X_1|X_2}, b_{X_1|\doop(X_2)}, b_{X_1}$ for the $X_1$-dependency of $I_{X_2}(X_1)$, $I_{\doop(X_2)}(X_1)$ and $I_{\emptyset}(X_1)$.

Figure \ref{fig:linear_scm_experiment} compares $b_{X_1|X_2}$ vs. $b_{X_1}$ (left) and  $b_{X_1|\doop(X_2)}$ vs. $b_{X_1}$ (right) for all sampled SCMs. Each point represents a different linear SCM. For the computation of $b_{X_1|\doop(X_2)}$ we used the DirectLiNGAM algorithm from \cite{shimizu2011directlingam} to identify the causal graph, so that no prior knowledge about the causal structure was used. The color of each point represents how many of the paths between $X_1$ and $Y$ that run through $X_2$ are blocked due to $X_2$ acting as a collider. This is measured by a heuristic quantity described in Appendix \ref{app:details_on_linear_SCM_experiment}.

For yellow points, almost all paths between $X_1$ and $Y$ running through $X_2$ contain $X_2$ as a collider, so that we expect the collider bias to be most distinct for these points. We observe that, in the comparison $b_{X_1|X_2}$ vs. $b_{X_1}$ (left), these points are usually placed far away from the diagonal, indicating that including the observed value of $X_2$ did change the importance of $X_1$. For the interventional comparison $b_{X_1|\doop(X_2)}$ vs. $b_{X_1}$ (right), these points lie on the diagonal and hence context $X_2$ that mostly affects $X_1$ via collider bias does not lead to a change in importance attributed to $X_1$.

\paragraph{Nonlinear case.} 
\begin{figure}[t]
    \centering
    \includegraphics[width=\linewidth]{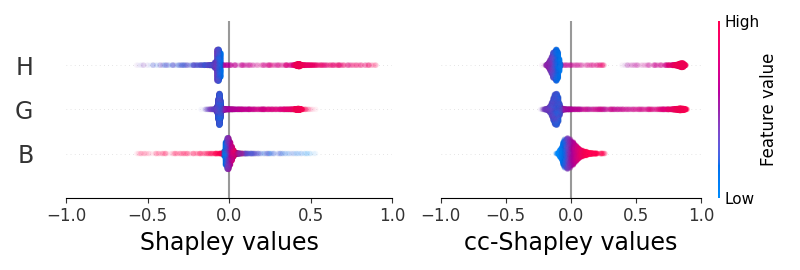}
    \caption{Shapley (left) and cc-Shapley (right) values for the nonlinear diabetes example considered in Section \ref{sec:experimental_results}.}
    \label{fig:shapley_values_diabetes_risk}
\end{figure}
Consider again the task of predicting whether a patient has type 2 diabetes ($Y=1$) or not ($Y=0$). We assume this time that the patient did obey and not eat breakfast, but measure in addition to the blood glucose $G$ also the average blood sugar $H$ and the BMI $B$ of the patient.
The causal graph for this example is shown in Figure \ref{fig:causal_graph_diabetes_risk}, the (synthetic) SCM is specified in Appendix \ref{app:details_on_diabetes_risk}.

\begin{figure}[t]
\setlength{\tabcolsep}{6pt}
\renewcommand{\arraystretch}{1.2}

\begin{tabular}{c C{0.22\textwidth} C{0.22\textwidth}}

\multirow{2}{*}{\vspace{-2.6cm}\rotatebox{90}{\textit{univariate}}}
 & $I_{\emptyset}(B)$
 & $I_{\emptyset}(G)$ \\

 & \hspace{-0.8cm}\includegraphics[width=0.85\linewidth]{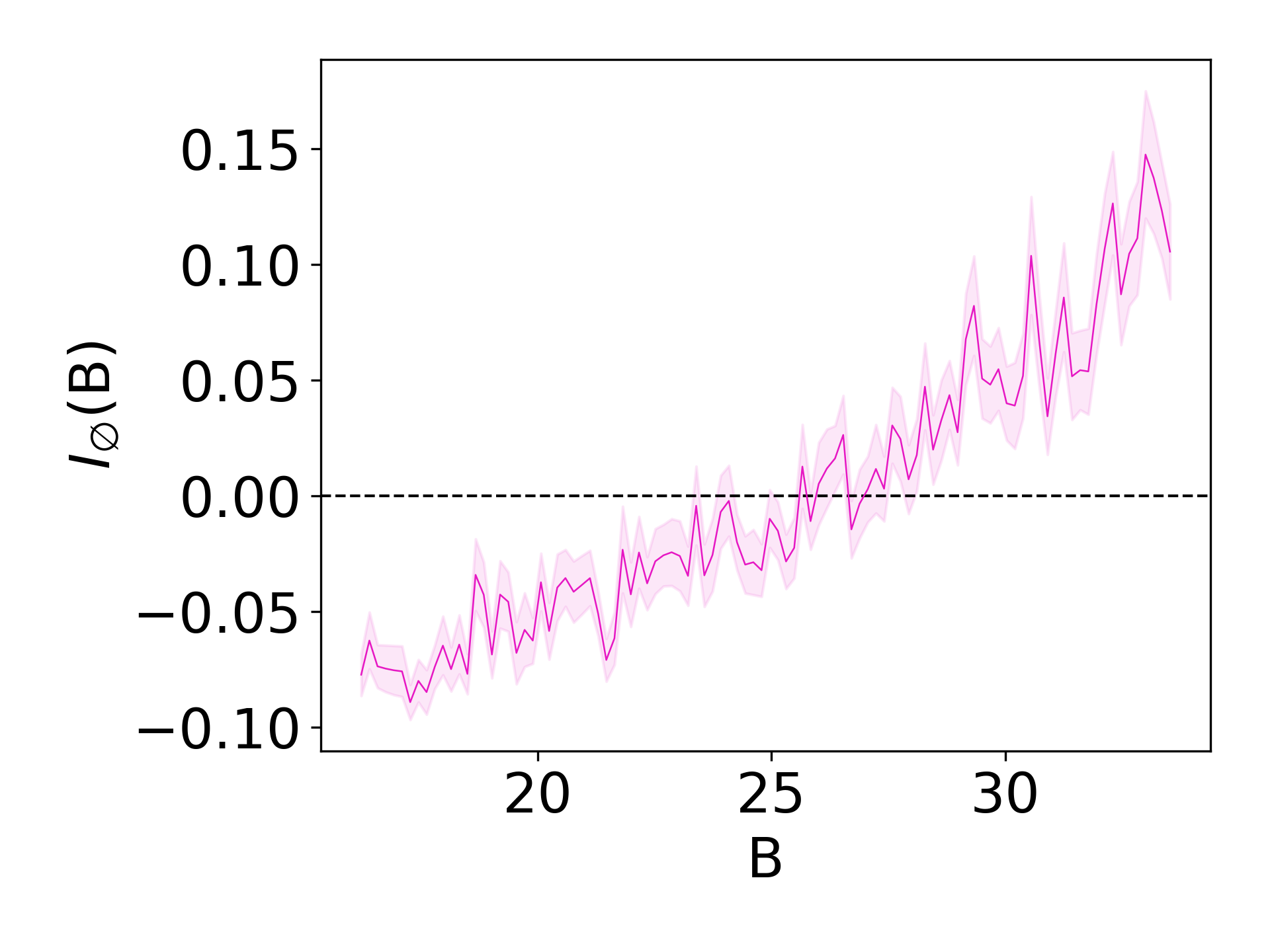}
 & \hspace{-0.4cm}\includegraphics[width=0.85\linewidth]{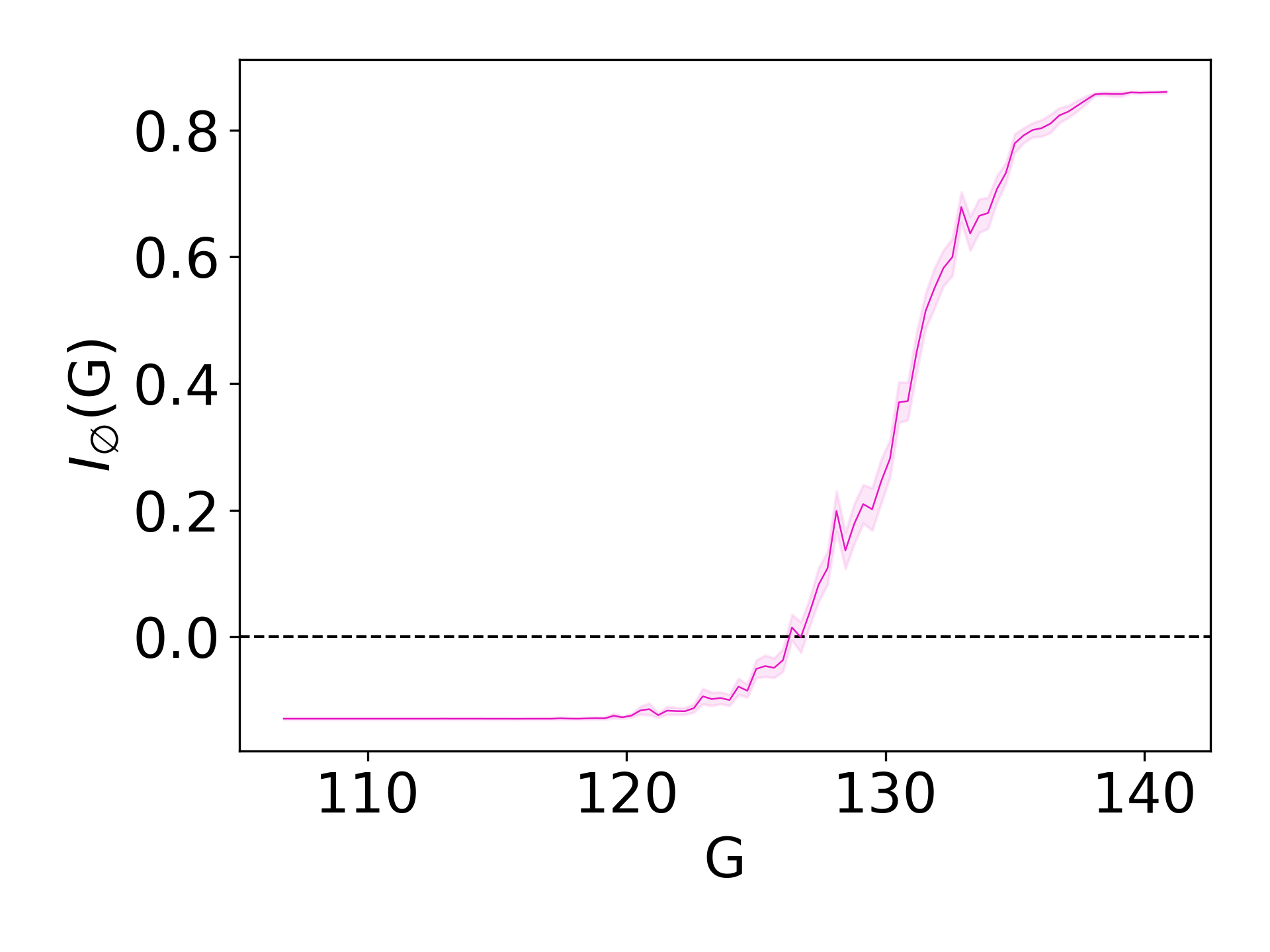}\\ 

\multirow{2}{*}{\vspace{-2.7cm}\rotatebox{90}{\textit{observational}}}
 & $I_G(B)$
 & $I_B(G)$ \\

 & \includegraphics[width=\linewidth]{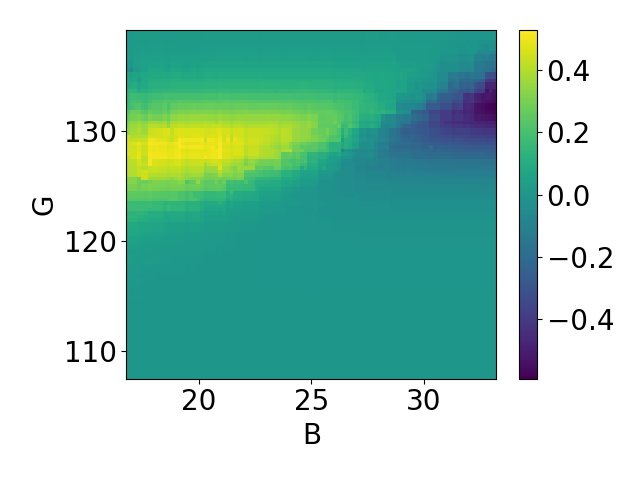}
 & \includegraphics[width=\linewidth]{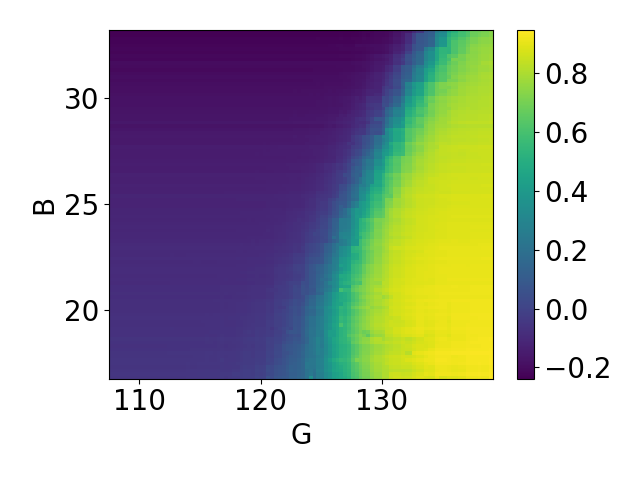}\\ 

\multirow{2}{*}{\vspace{-2.7cm}\rotatebox{90}{\textit{interventional}}}
 & $I_{\doop(G)}(B)$
 & $I_{\doop(B)}(G)$ \\

 & \includegraphics[width=\linewidth]{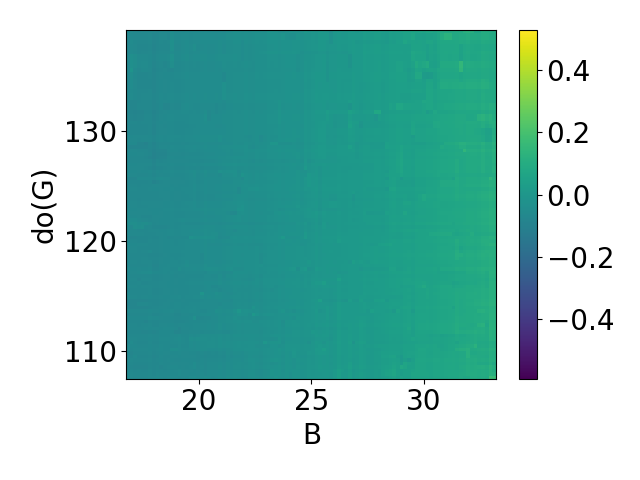}
 & \includegraphics[width=\linewidth]{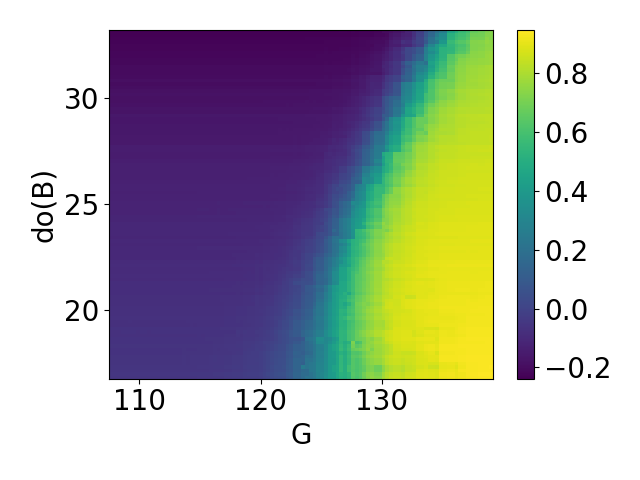} 
\end{tabular}
\caption{Various terms arising in the computation of the conventional observational Shapley values \eqref{eq:shapley_values} and the cc-Shapley values \eqref{eq:shapley_values_do} for the (more complex) diabetes example introduced in Section \ref{sec:experimental_results}. The upper row shows (with standard error) the context-free univariate term that coincides for \eqref{eq:shapley_values} and \eqref{eq:shapley_values_do}. 
}
\label{fig:bivariate_shape_functions}
\end{figure}
It is widely accepted that a high BMI $B$ is associated with an increased risk of type 2 diabetes \citep{chandrasekaran2024role}. 
When looking at the Shapley values \eqref{eq:shapley_values} shown on the left in Figure \ref{fig:shapley_values_diabetes_risk}, we might come to a different conclusion:  The BMI $B$ is attributed a negative relevance for diabetes, which one could easily misinterpret as ``high BMI leads to a low diabetes risk''.  Using cc-Shapley values \eqref{eq:shapley_values_do} instead leads to the values depicted on the right in Figure \ref{fig:shapley_values_diabetes_risk}. The importance behavior of $B$ (and the other features) now matches intuition. This different behavior of Shapley and cc-Shapley values arises, once more, due to collider bias. 

 The Shapley values for $B$ in this example are computed as $\phi(B)=\frac{1}{3}I_{\emptyset}(B)+\frac{1}{6}I_{G}(B)+\frac{1}{6}I_{H}(B)+\frac{1}{3}I_{\{G,H\}}(B)$. 
As shown in the first plot in the top row of Figure \ref{fig:bivariate_shape_functions}, the context-free quantity $I_{\emptyset}(B)$ alone indicates a positive relevance for diabetes risk in line with our expectations. The overall negative relevance of large $B$ must therefore stem from the terms with context. The left plot in the middle row of Figure \ref{fig:bivariate_shape_functions} shows the values of $I_{G}(B)$ for various choices of $B$ and $G$ as a heatmap. For $G$ in a certain range, $I_{G}(B)$ indicates a negative relevance of $B$. A similar behavior can be observed for $H$, cf. Appendix \ref{app:details_on_diabetes_risk}. A look at the causal graph in Figure \ref{fig:causal_graph_diabetes_risk} reveals that $H$ and $G$ act as colliders between $B$ and $Y$ so that conditioning on $G$, $H$ or $\{G,H\}$ will create a spurious (here negative) association with $Y$ leading to the pattern observed in Figure \ref{fig:shapley_values_diabetes_risk}.

Once we use $I_{\doop(G)}(B)$ instead of $I_{G}(B)$, this bias is removed and we observe no negative relevance of $B$ in the context of $G$, as shown by the first plot in the third row of Figure \ref{fig:bivariate_shape_functions}.  Interestingly, when we switch the roles of $B$ and $G$, we obtain a different effect: Lemma \ref{lem:observation_equals_intervention} implies that $I_{\doop(B)}(G)=I_{B}(G)$ as there are no backdoor paths to $B$. Both observational and interventional objects now match, cf. the second column of Figure \ref{fig:bivariate_shape_functions}, and reveal a positive relevance of $G$. The reason for the context dependency of $I_{\doop(B)}(G)$ is that the relation $B \rightarrow G$ is causal and knowing $B$
allows us to remove the variance in $G$ and thus judge the relation between $G$ and $Y$ with more precision. This behavior resembles the observations of Section \ref{sec:methodology}: The suppression effect of $B$ on $G$ is now only accounted for in the relevance of the suppressed variable $G$ and does not affect the importance of the suppressor $B$.
In fact, we have $\phi_{cc}(B)=I_{\emptyset}(B)$ as we show in Appendix \ref{app:details_on_diabetes_risk}.

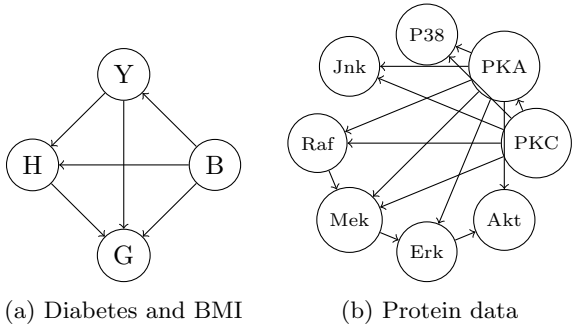
\begin{figure}[t]
  \begin{subfigure}[t]{0.23\textwidth}
    \centering
  \begin{tikzpicture}[scale=0.6, every node/.style={circle, draw}]
      \draw
        (0.0:2) node (B){B}
        (90.0:2) node (Y){Y}
        (180.0:2) node (H){H}
        (270.0:2) node (G){G};
      \begin{scope}[->]
        \draw (B) to (Y);
        \draw (B) to (H);
        \draw (B) to (G);
        \draw (Y) to (H);
        \draw (Y) to (G);
        \draw (H) to (G);
      \end{scope}
    \end{tikzpicture}

    \caption{Diabetes and BMI}
    \label{fig:causal_graph_diabetes_risk}
  \end{subfigure}
  \begin{subfigure}[t]{0.23\textwidth}
    \centering
  \begin{tikzpicture}[scale=0.72,every node/.style={circle, draw,font=\scriptsize}]
      \draw
        (0.0:2) node (PKC){PKC}
        (45.0:2) node (PKA){PKA}
        (90.0:2) node (P38){P38}
        (135.0:2) node (Jnk){Jnk}
        (180.0:2) node (Raf){Raf}
        (225.0:2) node (Mek){Mek}
        (270.0:2) node (Erk){Erk}
        (315.0:2) node (Akt){Akt};
      \begin{scope}[->]
        \draw (PKC) to (PKA);
        \draw (PKC) to (P38);
        \draw (PKC) to (Jnk);
        \draw (PKC) to (Raf);
        \draw (PKC) to (Mek);
        \draw (PKA) to (P38);
        \draw (PKA) to (Jnk);
        \draw (PKA) to (Raf);
        \draw (PKA) to (Mek);
        \draw (PKA) to (Erk);
        \draw (PKA) to (Akt);
        \draw (Raf) to (Mek);
        \draw (Mek) to (Erk);
        \draw (Erk) to (Akt);
      \end{scope}
    \end{tikzpicture}
    \caption{Protein data}
    \label{fig:causal_graph_sachs}
    \end{subfigure}
    \caption{Causal graphs for two datasets used in Section \ref{sec:experimental_results}. For the diabetes example (left) the meaning of the variables are: blood glucose $G$, average sugar $H$, BMI $B$ and presence of type 2 diabetes $Y$ (target). For the example from \cite{sachs2005causal} (right) the variables are various proteins with known causal graph. The protein \emph{PKA} is here chosen as target. } 
    \label{eq:causal_graphs}
\end{figure}

\paragraph{A real world example.}
We consider the dataset from \cite{sachs2005causal}, which contains information on the concentration of various proteins. We use a preprocessed version that discretizes the concentration of each protein in three categories, which we label as $0,1$ and $2$.
Details on the dataset and our implementation are contained in Appendix \ref{app:details_on_sachs}.
The causal graph, depicted in Figure \ref{fig:causal_graph_sachs}, shows the causal relation of the 8 considered proteins. We here consider the task of predicting the amount of the protein \emph{PKA}. 

Figure \ref{fig:univariate_importance_sachs_interventional} shows the univariate importance $I_{\emptyset}$ of the three proteins \emph{Jnk, PKC} and \emph{P38} depending on their value in the set \{0,1,2\}. The results for all features are contained in Appendix \ref{app:details_on_sachs}. As we saw above in Example \ref{ex:univariate_importance_not_enough}, univariate importance is in general not sufficient to fully explain relevance. However, we also saw above that we have to beware of collider bias when a fundamental deviation in the relevance behavior only occurs in the observational context of other variables.

Figure \ref{fig:shapley_values_sachs_interventional} shows the Shapley values (left) and cc-Shapley values (right) for the proteins \emph{Jnk, PKC} and \emph{P38}. For \emph{Jnk} both, Shapley and cc-Shapley values, show a behavior that is comparable to the univariate importance shown in Figure \ref{fig:univariate_importance_sachs_interventional}. For \emph{PKC} and \emph{P38} the Shapley values on the left show a deviation to the univariate behavior. Various data points with a high value of \emph{P38} are attributed negative Shapley values. For \emph{PKC} we see a mixed or even negative relevance in contrast to the slight positive relevance in Figure \ref{fig:univariate_importance_sachs_interventional}. Inspecting the causal graph in Figure \ref{fig:causal_graph_sachs} we see that \emph{PKC} (and hence \emph{P38}) is connected with the target \emph{PKA} via various colliders. Using them as observational context as in \eqref{eq:shapley_values} therefore bears the risk of introducing collider bias. Indeed, once we use the context only in a causal manner, we find a relevance attribution that contrasts less with the univariate analysis. In particular the (slight) positive relevance of \emph{PKC} is kept intact by the cc-Shapley values.

\begin{figure}
    \centering
  \begin{subfigure}[t]{0.5\textwidth}
  \centering
    \includegraphics[width=0.5\textwidth]{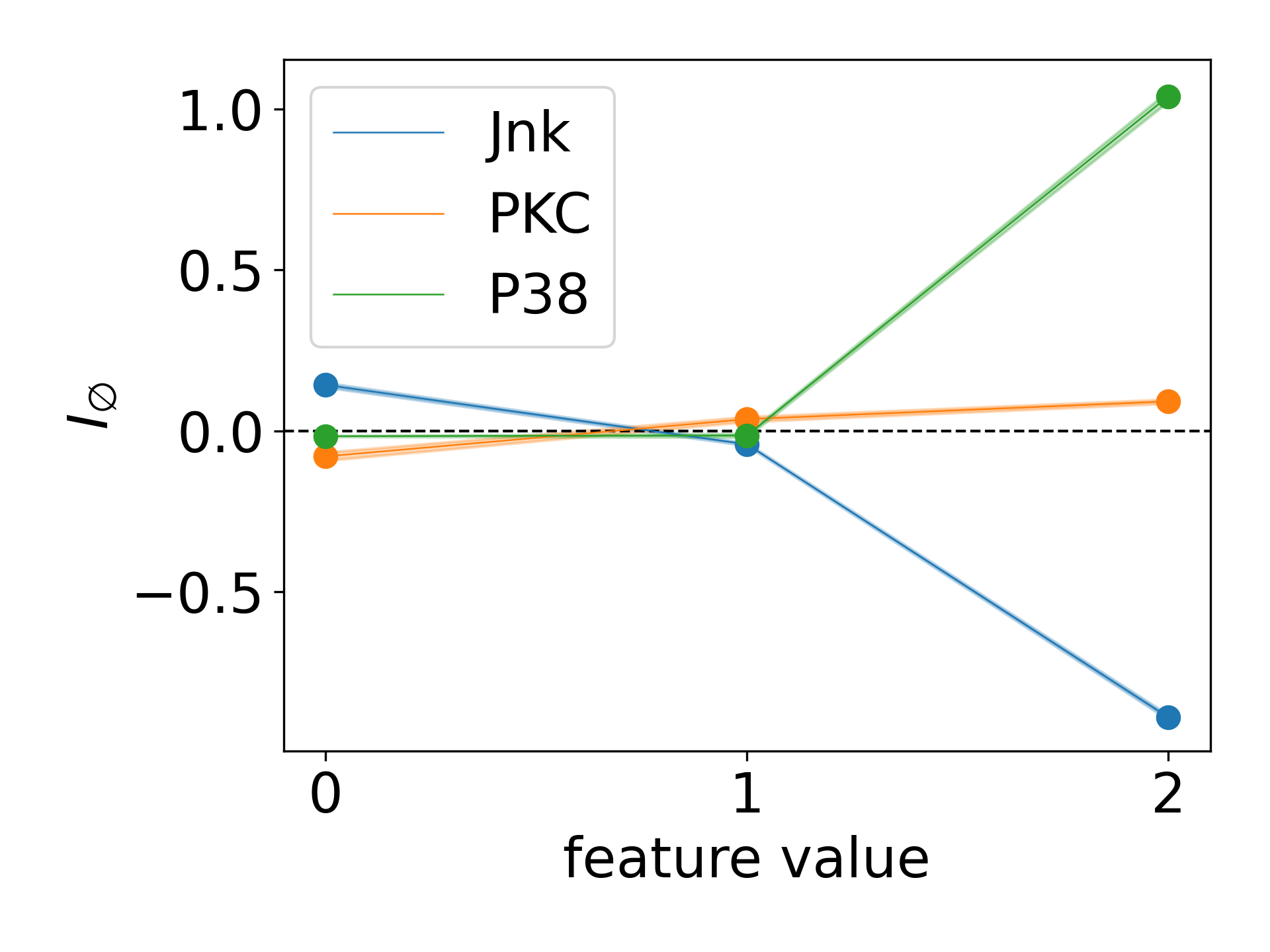}
    \caption{Univariate importance $I_{\emptyset}$}
    \label{fig:univariate_importance_sachs_interventional}
    \end{subfigure}
    \\
    \begin{subfigure}[t]{0.5\textwidth}
    \includegraphics[width=\textwidth]{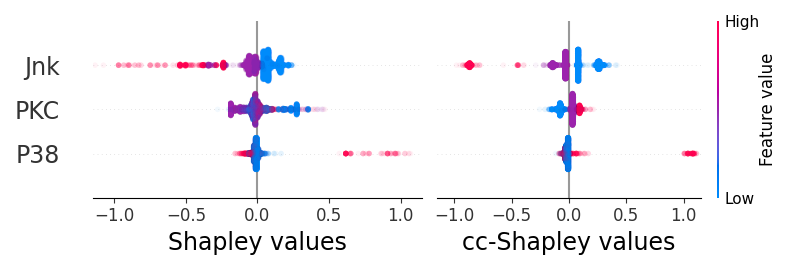}
    \caption{Shapley and cc-Shapley values}
    \label{fig:shapley_values_sachs_interventional}
    \end{subfigure}
    \caption{Some results of the univariate important measure $I_{\emptyset}$ and the Shapley and cc-Shapley values for the data from \cite{sachs2005causal}. Only a subset of the proteins shown in the causal graph \ref{fig:causal_graph_sachs} are depicted here. The results for all proteins are contained in Appendix \ref{app:details_on_sachs}.}
\end{figure}

\section{Conclusion}
In this work, we highlight a blind spot in purely observational approaches to XAI. When studying the relevance of a feature in the observational context of the remaining features, collider bias can distort the feature attribution and even flip the positive relevance of a feature to a negative one and vice versa. This phenomenon can already be observed for hand-crafted two-dimensional problems but also appears for more complex and real world data, and adds to further known biases of XAI methods such as biases towards salient features \citep{clark2026feature}. An incorrect feature attribution can easily lead to misinterpretations and thereby undermine the typical goals pursued when employing XAI.

Eradicating collider bias requires intervention and is hence beyond the reach of the observational rung of Pearl's ladder of causation \citep{pearl2018bookofwhy} on which most XAI methods fall. Given causal knowledge, we propose cc-Shapley values as an approach to correct for spurious associations introduced by collider bias. We observe in theoretical and experimental results that incorrect and misleading attribution due to collider bias are indeed removed when using cc-Shapley values.

\newpage
\bibliography{references}

\newpage
\onecolumn
\appendix 

\section*{\LARGE Supplementary Material}
\maketitle
\section{On the used version of Shapley values}
\label{app:on_the_used_version_of_shapley_values}

\paragraph{Comparison with the literature.} In the context of game theory, \cite{shapley1953value} introduced a general expression of the form
\begin{align}
    \label{eq:appendix_shapley_values_game_theory}
    \phi(X_j) = \sum_{\mathcal{S}\subseteq \mathcal{F}\backslash\{X_j\}} \gamma(\mathcal{S}) (v(\mathcal{S}\cup \{X_j\}) - v(\mathcal{S}))\,,
\end{align}
where $\gamma(\mathcal{S})={\small \frac{|\mathcal{S}|! (|\mathcal{F}|-|\mathcal{S}|-1)!}{|\mathcal{F}|!}}$ and $v: 2^{\mathcal{F}} \rightarrow \RR$ is a general set function that satisfies some simple properties and should be thought of as encoding a game. \cite{lipovetsky2001analysis} use $R^2$ for $v$ in the context of linear regression to judge feature importance. \cite{datta2016algorithmic} consider a more general framework that uses \eqref{eq:appendix_shapley_values_game_theory} to measure the influence of inputs with various choices of $v$. Referring to the works above, \cite{lundberg2017unified} formulate a  ``classic'' version of Shapley values as
\begin{align}
    \label{eq:appendix_shapley_Values_lundberg_classic}
    \phi(X_j) = \sum_{\mathcal{S}\subseteq \mathcal{F}\backslash\{X_j\}} \gamma(\mathcal{S}) (f(\mathcal{S}\cup \{X_j\}) - f(\mathcal{S}))\,,
\end{align}
here $f(\mathcal{S}\cup \{X_j\})$ and $f(\mathcal{S})$ denote functions that were trained only on the features $\mathcal{S}\cup\{X_j\}$ and $\mathcal{S}$ respectively. The optimally trained models $f$ have precise mathematical formulations in terms of a conditional expectation, cf. Lemma \ref{lem:conditional_expectation_as_optimal_model} below. We can thus identify $f(\mathcal{S}\cup \{X_j\})=\EE[Y|\mathcal{S},X_j]$ and $f(\mathcal{S})=\EE[Y|\mathcal{S}]$ and arrive at the expression \eqref{eq:shapley_values} used in this work.

Expression \eqref{eq:shapley_values} is model agnostic in the sense that it only uses optimal models and not a specific instance of a trained model. While \cite{lundberg2017unified} refer to classical Shapley values as formulated in \eqref{eq:appendix_shapley_Values_lundberg_classic}, their proposed algorithm SHAP completely builds on the full model $f(\mathcal{F})$. \cite{janzing2020feature} explicitly use the output of the model, called $\hat{Y}$ in the following, for their arguments and their algorithm. The approach presented in this work is also usable for $\hat{Y}$ instead of $Y$. 
A simplified, pragmatic adaption might be to treat $\hat{Y}$ as an estimate of $Y$, which is likely to yield comparable results for models with high accuracy (or low MSE in the case of regression). A more causal approach would be to add a node $\hat{Y}$ to the causal graph and draw arrows from each feature used for training to $\hat{Y}$ with the corresponding assignment function $f_{\hat{Y}}$ now just given by the learned machine learning model. Doing so will however not cure the problems inherent to \eqref{eq:shapley_values} that we outlined in the main text: Consider again the example with the causal diagram \ref{fig:causal_graph_diabetes_risk}. First, note that $Y$ can actually not simply be removed from the diagram as it acts as a mediator (a chain on a causal path) between $B$ and the nodes $G$ and $H$. Second, there is a collider bias between $H$ and $B$ when conditioning on $G$ so that even with $Y$ completely replaced by $\hat{Y}$ we would have terms in \eqref{eq:shapley_values} that are influenced by collider bias. 

Summarizing, we believe that the model-agnostic approach presented in this work is the most suitable and expressive to highlight feature relevance, as it uses the actual process underlying the data and simplifies matters by ignoring model specific aspects.

\paragraph{On the combinatorial coefficients $\gamma(\mathcal{S})$.} We mentioned above that there are various versions of Shapley values. These versions only differ in their choice of $v$ in \eqref{eq:shapley_values} but keep the same coefficients $\gamma(\mathcal{S})$. We here shortly recall the meaning of $\gamma(\mathcal{S})$ \citep{osborne1994course} to motivate why we also use them in the cc-Shapley values from Definition \ref{def:cc_Shapley}:

If we do not want to make any additional assumptions, it is unclear which features $\mathcal{S}\subseteq \mathcal{F}\backslash \{X_j\}$ to use as a context to judge the relevance of $X_j$. To decide this in a fair manner, imagine that we base our decision on a randomized experiment: We shuffle the order of features $\mathcal{F}=\{X_1,\ldots,X_n\}$ by uniformly drawing from all permutations on $\mathcal{F}$. We then use all variables that come before $X_j$ in such a ordering as context. The probability that the set $\mathcal{S}\subseteq \mathcal{F}$ is chosen as context in such an experiment is then just
\begin{align}
\label{eq:appendix_gamma_S}
\gamma(\mathcal{S})= \frac{|\mathcal{S}|! (|\mathcal{F}|-|\mathcal{S}|-1)!}{|\mathcal{F}|!} \,.
\end{align}
Here, $|\mathcal{S}|!$ are the number of ways we can reorder $\mathcal{S}$ in front of $X_j$, $(|\mathcal{F}|-|\mathcal{S}|-1)!$ are the number of ways we can order the remaining features after $X_j$, and the denominator $|\mathcal{F}|!$ is just the number of permutations in $\mathcal{F}$. In particular, we see that the coefficients in \eqref{eq:appendix_shapley_values_game_theory} add up to 1.

The same logic can be applied to motivate the usage of $\gamma(\mathcal{S})$ for the cc-Shapley values in Definition \ref{def:cc_Shapley}: We choose a random subset $\mathcal{S}\subseteq \mathcal{F}\backslash\{X_j\}$ as causal context using the same randomized procedure.

\paragraph{On the asymmetry of cc-Shapley values.} In Definition \ref{def:cc_Shapley} we introduce the cc-Shapley values as
\begin{align}
    \label{eq:shapley_values_do_appendix}
    \phi_{cc}(X_j) = \!\!\!\!\sum_{\mathcal{S}\subseteq \mathcal{F}\backslash\{X_j\}} \!\!\!\! \gamma(\mathcal{S})\, (\EE[Y|X_j,\doop(\mathcal{S})]- \EE[Y|\doop(\mathcal{S})])\,.
\end{align}
We motivated above the usage of the same coefficients $\gamma(\mathcal{S})$. However, in one other fundamental aspect this definition differs from \eqref{eq:appendix_shapley_values_game_theory}: The objects $\EE[Y|X_j,\doop(\mathcal{S})]$ and $\EE[Y|\doop(\mathcal{S})]$ cannot be understood as set functions as they treat $X_j$ and $\mathcal{S}$ differently. Our cc-Shapley values therefore lack the symmetry in the treatment of all considered features and, in general, will not have a property such as $\sum_{j} \phi(X_j)=v(\mathcal{F})-v(\emptyset)$ satisfied by \eqref{eq:appendix_shapley_values_game_theory} \citep{shapley1953value}. From a game theoretical viewpoint our asymmetric modification means that we give the ``player'' $X_j$ an advantage that we don't grant the other ``players'' $\mathcal{S}$, namely to use non-causal association. This asymmetric treatment is vital to the performance of $\phi_{cc}$. In fact,

\begin{itemize}
    \item We need to allow for non-causal association between $X_j$ and $Y$ to obtain non-trivial importance in anti-causal setups, cf. Section \ref{sec:methodology}.
    \item The context variables $\mathcal{S}$ cannot be treated in the same way, as we would otherwise allow $X_j$ to be attributed importance that solely arises from suppressing noise in the variables $\mathcal{S}$ as we observe for Example \ref{ex:diabetes_breakfast}.
\end{itemize}
Breaking the symmetric, equal treatment of all features in \eqref{eq:appendix_shapley_values_game_theory} is therefore precisely what allows \eqref{eq:shapley_values_do_appendix} to correct spurious associations.

\section{Detailed comparison with other methods}
\label{app:detailed_comparision_with_other_methods}
We here compare cc-Shapley with different methods from the literature that also consider Shapley values in the light of the causal structure of the data. Most of the methods listed below are not designed for an \emph{anti-causal} setting, that is a setting where $Y$ is a cause and not the effect of its features. Suppression as a phenomenon related to collider bias is, however, more naturally linked to such a setting. For the anti-causal setting of Example \ref{ex:diabetes_breakfast}, we find that the methods listed below do either not attribute relevance to the informative feature $G$ or also attribute relevance to the suppressor $C$. The only exception to this are (a modification of) asymmetric Shapley values (Section \ref{subsec:asymmetric_shapley_values}), where we however observe inconsistent behavior between causal and anti-causal settings. For better readability and comparability, we will rewrite all methods in a similar notation to ours.

\subsection{$\doop$-Shapley}
\label{subsec:do_Shapley}
\cite{jung2022measuring} propose \emph{$\doop$-Shapley} as a method for measuring causal contributions of each variable, which they define as
\begin{align}
\label{eq:do_Shapley}
    \phi_{\doop}(X_j) = \sum_{\mathcal{S}\subseteq \mathcal{F}\backslash\{X_j\}} {\small \frac{|\mathcal{S}|! (|\mathcal{F}|-|\mathcal{S}|-1)!}{|\mathcal{F}|!}} (\EE[Y|\doop(X_j,\mathcal{S})]-\EE[Y|\doop(X_j)]\,.
\end{align}
Additional aspects regarding the computation of do-Shapley are provided by \cite{parafita2025practical} and \cite{teal2026exactly}.
\cite{jung2022measuring} explicitly require $Y$ to be the last element in the topological order of the graph, whereas \cite{parafita2025practical} simply discard all variables that are not ancestors of $Y$. In fact, since \eqref{eq:do_Shapley} accounts only for causal contributions every non-causal association is discarded. Accordingly, we obtain for Example \ref{ex:diabetes_breakfast} 
\begin{align*}
    \phi_{\doop}(C)=\phi_{\doop}(G)=0\,.
\end{align*}

\subsection{Causal Shapley values}
\label{subsec:causal_Shapley_values}
\begin{figure}[t]
    \centering
    \includegraphics[width=0.5\linewidth]{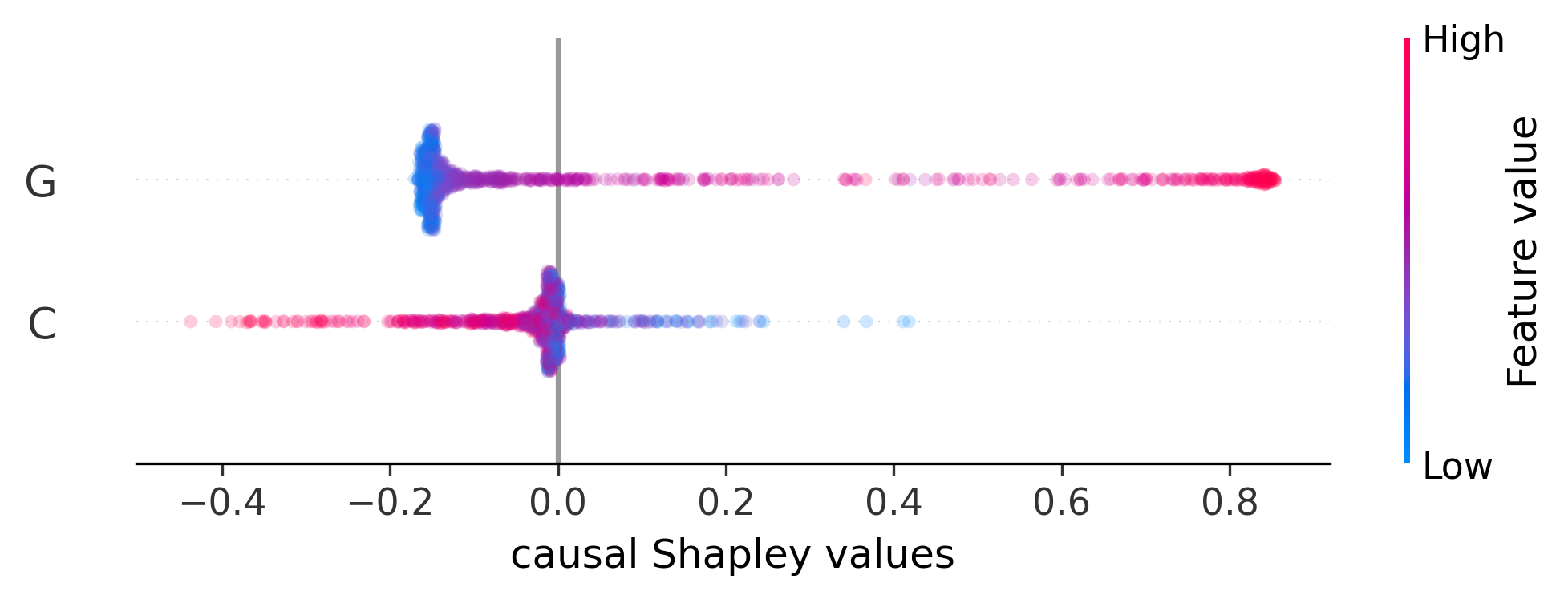}
    \caption{Causal Shapley values as proposed by \cite{heskes2020causal} evaluated for the running Example \ref{ex:diabetes_breakfast}.}
    \label{fig:causal_shapley_diabetes_breakfast}
\end{figure}

\cite{heskes2020causal} propose \emph{causal Shapley values} which are given as 
\begin{align}
\label{eq:causal_Shapley}
    \phi_{\mathrm{causal}}(X_j) = \sum_{\mathcal{S}\subseteq \mathcal{F}\backslash\{X_j\}} {\small \frac{|\mathcal{S}|! (|\mathcal{F}|-|\mathcal{S}|-1)!}{|\mathcal{F}|!}} (\EE[f(\mathcal{F})|\doop(X_j,\mathcal{S})]-\EE[f(\mathcal{F})|\doop(X_j)]\,,
\end{align}
with $f(\mathcal{F})$ being the machine learning model under consideration that uses all features $\mathcal{F}=\{X_1,\ldots,X_n\}$. The only difference to do-Shapley from Section \ref{subsec:do_Shapley} is thus the replacement of $Y$ by $f(\mathcal{F})$. For the anti-causal setup of Example \ref{ex:diabetes_breakfast} this creates, however, a fundamental difference in behavior, as $f(\mathcal{F})$ is the effect of all features $\mathcal{F}=\{C,G\}$. Figure \ref{fig:causal_shapley_diabetes_breakfast} shows the values of $\phi_{\mathrm{causal}}$ for $f(\mathcal{F})$ fitted as in Section \ref{app:details_on_diabetes_breakfast}. For the computation of $\phi_{\mathrm{causal}}$, we used the known SCM \eqref{eq:scm_diabetes_breakfast} and averaged $f(C,G)$ over 1000 samples from each intervened SCM to compute expectations. We observe that, similar to the behavior of conventional Shapley values, causal Shapley values falsely indicate a negative relevance for the suppressor $C$.

\subsection{Intrinsic causal contribution}
\cite{janzing2024quantifying} introduce the \emph{intrinsic causal contribution} (ICC) of a feature $X_j$ via 
\begin{align}
\label{eq:icc}
    \phi_{\mathrm{ICC}}(X_j) = \sum_{\mathcal{S}\subseteq \mathcal{V}\backslash\{X_j\}} {\small \frac{|\mathcal{S}|! (|\mathcal{V}|-|\mathcal{S}|-1)!}{|\mathcal{V}|!}} (\psi(Y|U_j,U_{\mathcal{S}})-\psi(Y|U_{\mathcal{S}}))\,,
\end{align}
where $\psi(Y|U_{\mathcal{S}})$ either denotes $\EE[\Var(Y|U_{\mathcal{S}})]$ (for regression) or the conditional entropy $H(Y|U_{\mathcal{S}})$ (for classification). Note that, in contrast to all other methods discussed in this section, \eqref{eq:icc} really sums over subsets of $\mathcal{V}=\mathcal{F}\cup \{Y\}$, i.e. subsets that might include $Y$. For Example \ref{ex:diabetes_breakfast}, we have for any $X\in \{C,G\}$ and $U_T\subseteq \{U_C,U_G,U_Y\}\backslash\{U_X\}$ that
\begin{align}
    \psi(Y|U_T)-\psi(Y|U_T,U_X) = 0 \,,
\end{align}
since $Y\ind U_X | U_T$. Thus, similar as for do-Shapley from Section \ref{subsec:do_Shapley}, we find that the ICC framework does not assign importance to any feature in Example \ref{ex:diabetes_breakfast}.

\subsection{Asymmetric Shapley values}
\label{subsec:asymmetric_shapley_values}
\cite{frye2020asymmetric} introduce \emph{asymmetric Shapley values}, which they define as
\begin{align}
    \label{eq:asymmetric_Shapley}
    \phi_{\mathrm{asym.}}(X_j)= \sum_{\pi} w_{\pi} (\EE[f(\mathcal{F})|\{X_i| \pi(i)\leq \pi(j)  \}) - \EE[f(\mathcal{F})|\{X_i| \pi(i) < \pi(j)  \}])\,.
\end{align}
Here the sum runs over all permutations $\pi$ on $\{1,\ldots,n\}$ and the $w_\pi\geq 0$ add up to 1. \cite{frye2020asymmetric} suggest to restrict $w_\pi\propto 1$ to only those permutations that are coherent with the topological ordering of the graph.
In the form \eqref{eq:asymmetric_Shapley} discussed by \cite{frye2020asymmetric}, asymmetric Shapley values are designed to work in a causal setting as $f(\mathcal{F})$ can be considered as the effect of all features, cf. Section \ref{subsec:causal_Shapley_values}. 
However, if we assume that $f(\mathcal{F})$ is the optimal model $\EE[Y|\mathcal{F}]$ (cf. Section \ref{app:conditional_expecation_as_optimal_model}), we can actually rephrase \eqref{eq:asymmetric_Shapley} as 
\begin{align}
    \label{eq:asymmetric_Shapley_modified}
    \phi_{\mathrm{asym.}}(X_j)= \sum_{\pi} w_{\pi} (\EE[Y|\{X_i| \pi(i)\leq \pi(j)  \}) - \EE[Y|\{X_i| \pi(i) < \pi(j)  \}])\,,
\end{align}
where we used the tower property of conditional expectations to replace $f(\mathcal{F})=\EE[Y|\mathcal{F}]$ with $Y$ within the conditional expectations of \eqref{eq:asymmetric_Shapley}. In the following, we discuss the behavior of \eqref{eq:asymmetric_Shapley_modified} in the presence of  suppression and in anti-causal settings.

For Example \ref{ex:diabetes_breakfast} there is only one permutation $\pi$ that respects the topological ordering of $\mathcal{F}=\{G,C\}$ and which yields
\begin{align}
    \label{eq:asymmetric_Shapley_diabetes_breakfast}
    \begin{aligned}
    \phi_{\mathrm{asym.}} (C) &= \EE[Y|C] -\EE[Y] = 0\,, \\
    \phi_{\mathrm{asym.}} (G) &= \EE[Y|G,C] - \EE[Y|C]= I_C(G) \neq 0  \,.
    \end{aligned}
\end{align}
Thus, asymmetric Shapley values \eqref{eq:asymmetric_Shapley} correctly indicate relevance of $G$ and no relevance for $C$. As can be seen in \eqref{eq:asymmetric_Shapley_diabetes_breakfast}, they can however enforce the usage of context: $\phi_{\mathrm{asym.}}(G)$ has no term comparable to $I_{\emptyset}$ from \eqref{eq:shapley_values}. This can lead to an inconsistent behavior in anti-causal settings when compared with a causal setting. To see this, consider the two simple cases illustrated in Figure \ref{fig:causal_anticausal_mediation}. Figure \ref{fig:causal_mediation} show a scenario where $X$ influences $Y$ through a mediator $M$. The asymmetric Shapley value for $X$ is given as 
\begin{align}
    \phi_{\mathrm{asym.}}(X)=\EE[Y|X]-\EE[Y]
\end{align}
which will, in general, be different from zero and thus $X$ is marked as relevant. When using the anti-causal meditation scenario in Figure \ref{fig:anticausal_mediation}, however, we obtain
\begin{align}
    \phi_{\mathrm{asym.}}(X)=\EE[Y|X,M] - \EE[Y|M] = 0\,,
\end{align}
since $X\ind Y |M$. Once more, as for Example \ref{ex:diabetes_breakfast}, we are forced to use context which yields in this case zero relevance of $X$. In other words, the impact of mediation on the relevance of a feature fundamentally differs between causal and anti-causal settings.

In summary, asymmetric Shapley values in the modified form \eqref{eq:asymmetric_Shapley_modified} are the only method discussed in this section which is well-behaved on Example \ref{ex:diabetes_breakfast}. However, in contrast to their behavior in a causal setting, and to the cc-Shapley values from Definition \ref{def:cc_Shapley}(which always possess a no-context term $I_{\emptyset}(X)$) , they are sensitive to mediation in an anti-causal setting.
\begin{figure}[t]
  \begin{subfigure}[t]{0.49\textwidth}
    \centering
    \begin{tikzpicture}[->, node distance=3cm, every node/.style={circle, draw}]
  \node (X) at (0,0) {$X$};
  \node (M) at (2.5,0) {$M$};
  \node (Y) at (5,0) {$Y$};
  \begin{scope}[->]
    \draw (X) to (M);
    \draw (M) to (Y);
  \end{scope}
\end{tikzpicture}
  \caption{Causal mediation}
  \label{fig:causal_mediation}
  \end{subfigure}
  \begin{subfigure}[t]{0.49\textwidth}
    \centering
    \begin{tikzpicture}[->, node distance=3cm, every node/.style={circle, draw}]
  \node (X) at (0,0) {$X$};
  \node (M) at (2.5,0) {$M$};
  \node (Y) at (5,0) {$Y$};
  \begin{scope}[->]
    \draw (Y) to (M);
    \draw (M) to (X);
  \end{scope}
\end{tikzpicture}
  \caption{Anti-causal mediation}
  \label{fig:anticausal_mediation}
  \end{subfigure}
  \caption{Simple cases of causal (left) and anti-causal (right) mediation between $X$ and $Y$ through $M$ on which asymmetric Shapley values show inconsistent behavior.}
  \label{fig:causal_anticausal_mediation}
\end{figure}

\section{Further theoretical aspects}
\subsection{Conditional expectation as optimal model}
\label{app:conditional_expecation_as_optimal_model}
The following basic result  recalls that the mathematically optimal model for a typical (supervised) machine learning problem is given by the conditional expectation. In particular, this shows that we can obtain an estimate of conditional expectations such as $\EE[Y|\mathcal{S}]$  with features $\mathcal{S}\subseteq \mathcal{F}$ simply by training a data-driven model with input $\mathcal{S}$ and target $Y$ using MSE-loss (for regression) or cross-entropy loss (for classification).

\begin{lemma}[Conditional expectation as optimally trained model] \label{lem:conditional_expectation_as_optimal_model} Consider a set of square-integrable random variables $\mathcal{S}$ and another square integrable and real-valued random variable $Y$ defined on the same probability space. Then the optimization problem
    \begin{align}
    \label{eq:optimal_l2_solution}
    \argmin_{f} \EE[|Y-f(\mathcal{S})|^2]
    \end{align}
    has (almost surely) a unique solution given by the conditional expectation $\EE[Y|\mathcal{S}]$. If $Y$ is binary, then $\EE[Y|\mathcal{S}]=P(Y=1|\mathcal{S})$ is moreover the optimal solution to the problem
    \begin{align}
    \label{eq:optimal_cross_entropy_solution}
    \argmin_{f} \EE[\mathcal{L}_{\mathrm{CE}}(Y;f(\mathcal{S}))] \,,
    \end{align}
    where $\mathcal{L}_{\mathrm{CE}}(Y;f(S))=-Y\log f(\mathcal{S}) - (1-Y) (1-\log f(\mathcal{S}))$ is the cross-entropy loss.
\end{lemma}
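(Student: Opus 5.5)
The plan is to use the standard Hilbert-space characterization of conditional expectation. First, $\EE[Y|\mathcal{S}]$ is the orthogonal projection of $Y\in L^2$ onto the closed subspace $L^2(\sigma(\mathcal{S}))$ of square-integrable $\sigma(\mathcal{S})$-measurable random variables. Second, by the Doob--Dynkin lemma, every such random variable can be written as $f(\mathcal{S})$ for some measurable $f$. So the minimization over $f$ in \eqref{eq:optimal_l2_solution} is exactly the minimization over that subspace.

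For any admissible $f$ with $f(\mathcal{S})\in L^2$, write $g=\EE[Y|\mathcal{S}]$ and expand
\begin{align*}
\EE[|Y-f(\mathcal{S})|^2] = \EE[|Y-g|^2] + 2\EE[(Y-g)(g-f(\mathcal{S}))] + \EE[|g-f(\mathcal{S})|^2].
\end{align*}
The cross term vanishes: $g-f(\mathcal{S})$ is $\sigma(\mathcal{S})$-measurable, so the tower property gives
\begin{align*}
\EE[(Y-g)(g-f(\mathcal{S}))]=\EE[(g-f(\mathcal{S}))\,\EE[Y-g|\mathcal{S}]]=0.
\end{align*}
Hence the objective equals $\EE[|Y-g|^2]+\EE[|g-f(\mathcal{S})|^2]$. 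This is minimized exactly when $f(\mathcal{S})=g$ almost surely, which gives both existence and a.s.\ uniqueness. The point requiring care is integrability: the product in the cross term must be integrable, which follows from Cauchy--Schwarz since $Y$, $g$ and $f(\mathcal{S})$ all lie in $L^2$. If $f(\mathcal{S})\notin L^2$, the objective is $+\infty$ and such $f$ can be discarded.

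For binary $Y$, note first that $\EE[Y|\mathcal{S}]=P(Y=1|\mathcal{S})$ is immediate. For the cross-entropy part I would condition on $\mathcal{S}$: by the tower property,
\begin{align*}
\EE[\mathcal{L}_{\mathrm{CE}}(Y;f(\mathcal{S}))]=\EE\big[h_{p(\mathcal{S})}(f(\mathcal{S}))\big],
\end{align*}
where $p(\mathcal{S})=P(Y=1|\mathcal{S})$ and $h_p(q)=-p\log q-(1-p)\log(1-q)$ for $q\in(0,1)$. It then suffices to minimize $h_p(q)$ pointwise in $q$. The function $h_p$ is strictly convex, and its derivative $-p/q+(1-p)/(1-q)$ vanishes only at $q=p$. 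Equivalently, $h_p(q)-h_p(p)=\mathrm{KL}(\mathrm{Ber}(p)\,\|\,\mathrm{Ber}(q))\ge 0$. So the choice $f(\mathcal{S})=p(\mathcal{S})$ minimizes the integrand pointwise, and therefore minimizes its expectation.

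I expect the main obstacle to be the cross-entropy formula as printed. It reads $(1-Y)(1-\log f)$, which appears to be a typo for $(1-Y)\log(1-f)$. I would state that I use the standard form, since with the literal expression the minimization does not behave as intended. Beyond that, only edge cases need handling: $p\in\{0,1\}$, where the convention $0\log 0=0$ applies and the minimum is attained at the boundary. Measurability of the pointwise minimizer is automatic, because it is $p(\mathcal{S})$ itself.
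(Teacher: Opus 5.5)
Your proposal is correct and follows essentially the same route as the paper. The paper simply cites the $L^2$ optimality of $\EE[Y|\mathcal{S}]$ as a classical result, and you supply the standard orthogonality argument behind it; for the cross-entropy part the paper likewise conditions on $\mathcal{S}$ via the tower property and minimizes the inner expression pointwise, obtaining $f(\mathcal{S})=P(Y=1|\mathcal{S})$.

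You are also right about the typo, which the paper's own proof inherits. With the literal loss, the conditional risk is $(1-2p)\log q-(1-p)$, whose derivative $(1-2p)/q$ never vanishes for $p\neq 1/2$. So the paper's ``set the derivative to zero'' step only works for the standard form $-(1-Y)\log(1-f(\mathcal{S}))$ that you use, and your convexity/KL argument additionally confirms that the stationary point is the global minimum.
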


\begin{remark}
A similar statement holds for the multivariate case or multiclass case (when $Y$ is one-hot-encoded).
\end{remark}

\begin{proof}
The statement on $L2$-optimality \eqref{eq:optimal_l2_solution} is a classical result from probability theory \citep{loeve2013probability}. The statement based on \eqref{eq:optimal_cross_entropy_solution} can be seen by using the tower property of conditional expectations:
\begin{align*}
    &\EE[\mathcal{L}_{\mathrm{CE}}(Y;f(\mathcal{S}))] = \EE[\EE[\mathcal{L}_{\mathrm{CE}}(Y;f(\mathcal{S}))|\mathcal{S}]] \\ &= -\EE[p_\mathcal{S}\log f(\mathcal{S}) + (1-p_\mathcal{S})(1-\log f(\mathcal{S}))] \,,
\end{align*}
where $p_\mathcal{S}=P(Y=1|\mathcal{S})$.
Minimizing the inner expression w.r.t $f(\mathcal{S})$ (setting the derivative to zero), yields $f(\mathcal{S})=p_{\mathcal{S}}=P(Y=1|\mathcal{S})=\EE[Y|\mathcal{S}]$.
\end{proof}

\subsection{Proofs of Lemmas \ref{lem:irrelevant_context} and \ref{lem:observation_equals_intervention}}
\label{app:proofs_of_lemmas}
We here restate the lemmas from Section \ref{subsec:estimation} together with their proofs. 

\begin{lemma}[Irrelevant context]
Consider $X_j\in \mathcal{F}, \mathcal{S}\subseteq \mathcal{F}\backslash\{X_j\}$ and assume that there are no causal paths $\mathcal{S} \ipath Y, X_j$. Then we have $\EE[Y|X_j, \doop(\mathcal{S})]=\EE[Y|X_j]$ and $I_{\doop(\mathcal{S})}(X_j)=I_{\emptyset}(X_j)$.  
\end{lemma}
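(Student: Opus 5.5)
Since there are no causal paths from $\mathcal{S}$ to $Y$ or to $X_j$, no element of $\mathcal{S}$ is an ancestor of $Y$ or of $X_j$. The plan is to show that intervening on $\mathcal{S}$ leaves the joint distribution of $(X_j,Y)$ unchanged, and that conditioning on $X_j$ in $\mathcal{M}^{\doop(\mathcal{S})}$ is then unaffected by the values of $\mathcal{S}$. I would work directly with the SCM rather than the do-calculus, since the assignment functions make the argument transparent.

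First, consider the set $A=\ancestors(\{X_j,Y\})\cup\{X_j,Y\}$. By assumption $A\cap\mathcal{S}=\emptyset$. Evaluating the assignment functions in topological order, each variable in $A$ is a function only of the noise variables $(U_V)_{V\in A}$, because all of its parents again lie in $A$. The intervention $\doop(\mathcal{S}=s)$ only replaces $f_X$ for $X\in\mathcal{S}$, so all assignment functions of variables in $A$ are unchanged. Hence $(X_j,Y)$ is, in both $\mathcal{M}$ and $\mathcal{M}^{\doop(\mathcal{S}=s)}$, the same measurable function of $(U_V)_{V\in A}$. Its joint law under the intervention therefore coincides with its observational law, and it does not depend on $s$. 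The same holds for a stochastic intervention $\doop(\mathcal{S}\sim q)$, provided the drawn values are independent of the noise, as in Algorithm \ref{alg:compute_importance}.

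Second, I would deduce the claimed identities. The quantity $\EE[Y|X_j,\doop(\mathcal{S})]$ is read as $\EE[Y|X_j]$ computed in $\mathcal{M}^{\doop(\mathcal{S}=s)}$, or, in the stochastic version, as $\EE[Y|X_j,\mathcal{S}]$ in $\mathcal{M}^{\doop(\mathcal{S}\sim q)}$. In the fixed-$s$ reading, the first step gives directly that this equals the observational $\EE[Y|X_j]$.

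In the stochastic reading, $\mathcal{S}$ is drawn from $q$ independently of $(U_V)_{V\in A}$. Since $(X_j,Y)$ is a function of these noises alone, $\mathcal{S}\ind(X_j,Y)$ in the intervened model, so $\EE[Y|X_j,\mathcal{S}]=\EE[Y|X_j]$ there. This in turn equals the observational $\EE[Y|X_j]$ because the joint law of $(X_j,Y)$ is unchanged.

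Applying the same argument with $X_j$ dropped gives $\EE[Y|\doop(\mathcal{S})]=\EE[Y]$. Subtracting the two identities yields $I_{\doop(\mathcal{S})}(X_j)=I_{\emptyset}(X_j)$.

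The main obstacle is being precise about what $\EE[\,\cdot\,|X_j,\doop(\mathcal{S})]$ means, fixed versus stochastic intervention, and checking that the independence of the intervention draw from the exogenous noise is what delivers $\mathcal{S}\ind(X_j,Y)$. A graphical alternative also works: in the mutilated graph every path from $\mathcal{S}$ to $X_j$ or $Y$ must leave $\mathcal{S}$ through an outgoing edge. Because $\mathcal{S}$ has no directed path to $X_j$ or $Y$, such a path must contain a collider that is neither in $\{X_j\}$ nor an ancestor of $X_j$. Hence $\mathcal{S}\ind_{\mathcal{G}}Y\mid X_j$ after the intervention, which gives the same conclusion.
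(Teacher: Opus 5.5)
Your proof is correct, but your main argument takes a different route from the paper's. The paper argues graphically via rule 3 of the do-calculus, and your graphical alternative is essentially that argument.
\begin{itemize}
\item \emph{The paper's route.} Since $\mathcal{S}$ contains no ancestors of $X_j$, the relevant graph is $\mathcal{G}_{\overline{\mathcal{S}}}$. The paper verifies $Y \ind_{\mathcal{G}_{\overline{\mathcal{S}}}} \mathcal{S} \mid X_j$ as follows. It takes the last node $X_k\in\mathcal{S}$ on a supposedly unblocked path and notes that the path must leave $X_k$ through an outgoing edge. The first collider on the path is then a descendant of $X_k$ lying in $\{X_j\}\cup\ancestors(X_j)$, which forces $X_k\ipath X_j$, a contradiction. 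The identity $\EE[Y|\doop(\mathcal{S})]=\EE[Y]$ is taken as immediate from the absence of paths $\mathcal{S}\ipath Y$.
\item \emph{Your route.} You stay at the SCM level. The set $\{X_j,Y\}\cup\ancestors(\{X_j,Y\})$ is ancestrally closed and disjoint from $\mathcal{S}$. Hence $(X_j,Y)$ is the same function of the same independent noises before and after the intervention. Under $\doop(\mathcal{S}\sim q)$ it is moreover independent of $\mathcal{S}$.
\end{itemize}

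\emph{What your route buys.} It is self-contained and does not rely on the soundness of the do-calculus. It gives a stronger invariance: the whole joint law of $(X_j,Y)$ is unchanged, not just one conditional expectation. It justifies the denominator identity $\EE[Y|\doop(\mathcal{S})]=\EE[Y]$ rather than asserting it. It also makes explicit the fixed versus stochastic reading of $\doop(\mathcal{S})$ that Algorithm~\ref{alg:compute_importance} relies on.

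\emph{What the paper's route buys.} It uses the standard do-calculus idiom, which the paper reuses in Lemma~\ref{lem:observation_equals_intervention} and Lemma~\ref{lem:backdoor_adjustment}. There $\mathcal{S}$ may be an ancestor of $Y$, so a pure ancestral-set argument would no longer suffice.

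\emph{One caveat.} If you use your graphical version on its own, d-separation in the mutilated graph only yields $\EE[Y|X_j,\mathcal{S}]=\EE[Y|X_j]$ inside the intervened model. You still need the invariance of the law of $(X_j,Y)$ to return to the observational $\EE[Y|X_j]$. That invariance comes from your first step, or from rule 3 itself, which packages both parts.
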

\begin{proof}
We apply rule 3 of $\doop$ calculus \citep{pearl2009causality} to remove $\doop(\mathcal{S})$ from the conditional expectation, i.e., we have to verify the following d-separation
\begin{align}
    \label{eq:rule_3_do_calculus}
    Y \ind_{\mathcal{G}_{\overline{\mathcal{S}}}} \mathcal{S}\,\, |  \,\,  X_j  \,,
\end{align}
where $\mathcal{G}_{\overline{\mathcal{S}}}$ denotes the graph where all incoming edges to $\mathcal{S}$ were deleted.
The actual rule 3 uses $\mathcal{G}_{\overline{\mathcal{S}(X_j)}}$ instead of $\mathcal{G}_{\overline{\mathcal{S}}}$, which denotes the removal of all ancestors of $X_j$ from $\mathcal{S}$. By assumption, $\mathcal{S}$ contains, however, no ancestors of $X_j$. Consider, conditional on $X_j$, a potential unblocked path between $\mathcal{S}$ and $Y$ in $\mathcal{G}_{\overline{\mathcal{S}}}$. Denote by $X_k\in \mathcal{S}$ the last element within $\mathcal{S}$ on the way to $Y$ and consider, from now on, the unblocked sub-path from $X_k$ to $Y$. As we removed ingoing edges to $\mathcal{S}$, this path must have an outgoing edge at $X_k$. By assumption, there is no path $X_k \ipath Y$ . The only remaining path with an outgoing edge at $X_k$ in $\mathcal{G}_{\overline{\mathcal{S}}}$ that is unblocked conditional on $X_j$ can be a path on which $X_j$ or its ancestors $\ancestors(X_j)$ serve as colliders and there are no other colliders on this path. Let $C\in \ancestors(X_j)\cup \{X_j\}$ denote the first of these colliders. We know that there is no other collider between $X_k$ and $C$ and that the path leaves $X_k$ with outgoing edge, we must thus have $X_k \ipath C$ (the arrows cannot flip direction in between). However, this cannot be true as $C\in \ancestors(X_j)\cup \{X_j\}$ would imply that there is a path $X_k \ipath X_j$ which violates our assumption. This shows \eqref{eq:rule_3_do_calculus} and hence $\EE[Y|X_j, \doop(\mathcal{S})]=\EE[Y|X_j]$. 

As we assumed that there are no causal paths $\mathcal{S} \ipath Y$ we have further $\EE[Y|\doop(\mathcal{S})]=\EE[Y]$ from which the identity $I_{\doop(X_k)}(X_j)=I_{\emptyset}(X_j)$ follows.
\end{proof}

\begin{lemma}[Intervention equals observation]
Consider $X_j\in \mathcal{F},\,\mathcal{S}\subseteq \mathcal{F}\backslash\{X_j\}$ such that, either
\begin{itemize}
    \item \emph{(no backdoor paths)} there are no unblocked backdoor paths from $\mathcal{S}$ to $X_j$ or from $\mathcal{S}$ to $Y$, or
    \item \emph{(purely causal setup)} there are no causal paths $Y \ipath X_j,\mathcal{S}$ and no confounders $H \in \mathcal{V}\backslash \left(\{Y,X_j\}\cup\mathcal{S}\right)$ with $X_j \pathi H \ipath Y$ or $\mathcal{S} \pathi H \ipath Y$,
\end{itemize}

then we have the identities $\EE[Y|X_j,\doop(\mathcal{S})] = \EE[Y|X_j,\mathcal{S}]$ and $I_{\doop(\mathcal{S})}(X_j)=I_{\mathcal{S}}(X_j)$.
\end{lemma}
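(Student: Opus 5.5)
The plan is to apply rule 2 of $\doop$ calculus \citep{pearl2009causality}, in the same spirit as the proof of Lemma \ref{lem:irrelevant_context}, to convert $\doop(\mathcal{S})$ into conditioning on $\mathcal{S}$. Rule 2 allows replacing $\doop(\mathcal{S})$ by $\mathcal{S}$ in $P(Y|X_j,\doop(\mathcal{S}))$ provided
\begin{align*}
Y \ind_{\mathcal{G}_{\underline{\mathcal{S}}}} \mathcal{S} \,|\, X_j ,
\end{align*}
where $\mathcal{G}_{\underline{\mathcal{S}}}$ is the graph with all edges \emph{leaving} $\mathcal{S}$ removed. The same rule with empty conditioning set gives $\EE[Y|\doop(\mathcal{S})]=\EE[Y|\mathcal{S}]$, provided $Y\ind_{\mathcal{G}_{\underline{\mathcal{S}}}}\mathcal{S}$. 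Subtracting the two identities yields $I_{\doop(\mathcal{S})}(X_j)=I_{\mathcal{S}}(X_j)$. So in both cases the task is to verify these two d-separations. One subtlety must be handled up front: the do-expression is a function of $\mathcal{S}$ as set values in the intervened model, so "equality" should be read as equality of functions of $(X_j,\mathcal{S})$.

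\textbf{No backdoor case.} In $\mathcal{G}_{\underline{\mathcal{S}}}$, every path leaving $\mathcal{S}$ starts with an incoming edge, i.e.\ it is a backdoor path. I first take a path from $\mathcal{S}$ to $Y$ that is unblocked given $X_j$, and as in Lemma \ref{lem:irrelevant_context} shorten it to begin at the last $\mathcal{S}$-node. If this path does not pass through $X_j$, it is a backdoor path $\mathcal{S}$ to $Y$. It is unblocked either marginally, which is excluded by hypothesis, or only thanks to a collider in $\{X_j\}\cup\ancestors(X_j)$. In the latter case I split it at that collider: the segment from $\mathcal{S}$ to the collider, extended causally down to $X_j$, yields a backdoor path from $\mathcal{S}$ to $X_j$. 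I expect to show this path is unblocked, again contradicting the hypothesis. If the path passes through $X_j$ as a non-collider, it is blocked. The unconditional statement is immediate from the hypothesis.

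\textbf{Purely causal case.} Here I argue that every backdoor path from $\mathcal{S}$ to $Y$ (and its unblocking through $X_j$) must end with an edge into $Y$ or leave $Y$ outward. An edge out of $Y$ means the path eventually either reaches $\mathcal{S}$ or $X_j$ causally from $Y$, which is excluded, or hits a collider. That collider is not an ancestor of $X_j$, since otherwise $Y\ipath X_j$, so the path is blocked. If the path enters $Y$ via $\leftarrow$, I trace back from $Y$ to the first node where the direction changes. This node is a fork $H$ with $H\ipath Y$. If $H$ lies outside $\{X_j\}\cup\mathcal{S}$, I follow its other branch to obtain the forbidden confounder pattern. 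If $H\in\mathcal{S}$, the path would contain an outgoing edge from $\mathcal{S}$, which is deleted. If $H=X_j$, conditioning blocks the fork. The remaining possibility is a causal chain $\mathcal{S}\ipath Y$, whose first edge leaves $\mathcal{S}$ and is therefore deleted.

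The main obstacle I anticipate is the bookkeeping in the first case. There, the unblocking via colliders in $\ancestors(X_j)$ has to be converted into an unblocked backdoor path to $X_j$ without creating new colliders or passing back through $\mathcal{S}$. The confounder argument in the second case also needs care: the branch from $H$ must actually reach $\mathcal{S}$ or $X_j$ through a directed segment. If it does not, I plan to show a collider intervenes that is not in $\ancestors(X_j)$, which again uses the absence of $Y\ipath X_j$.
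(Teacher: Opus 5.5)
Your plan follows the paper's route: rule 2 on $\mathcal{G}_{\underline{\mathcal{S}}}$, the two d-separations, and, in the no-backdoor case, splitting an open path at a collider in $\{X_j\}\cup\ancestors(X_j)$. The bookkeeping you worry about in the no-backdoor case works out. Take the \emph{first} such collider $C$ seen from the $\mathcal{S}$-end, and a directed path $C\ipath X_j$ inside $\mathcal{G}_{\underline{\mathcal{S}}}$; this path cannot pass through $\mathcal{S}$, since $\mathcal{S}$-nodes have no children there. Shortcut at the first node of the segment that lies on it. The result is a collider-free, hence open, backdoor path from $\mathcal{S}$ to $X_j$.

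The genuine problem is in the purely causal case, in the sub-case you flag yourself. Suppose the branch leaving the fork $H$ (the one nearest to $Y$) does not run directed into $\mathcal{S}$. Then it meets a collider $C$, and you intend to show $C\notin\ancestors(X_j)$ from the absence of causal paths $Y\ipath X_j$. This cannot work: $C$ is a descendant of $H$, not of $Y$, so that hypothesis says nothing about it. Colliders in $\ancestors(X_j)$ are exactly the ones opened by conditioning on $X_j$.

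For instance, take $\mathcal{S}=\{X_k\}$ and edges $A\to X_k$, $A\to C$, $H\to C$, $C\to X_j$, $H\to Y$. The path $X_k\leftarrow A\to C\leftarrow H\to Y$ is open given $X_j$ in $\mathcal{G}_{\underline{\mathcal{S}}}$. There are no causal paths from $Y$ to $X_j$ or $\mathcal{S}$, and yet $C\in\ancestors(X_j)$.

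What excludes such paths is the confounder clause for $X_j$. From $C\in\{X_j\}\cup\ancestors(X_j)$ and $H\ipath C$ you get $X_j\pathi H\ipath Y$ with $H\notin\{Y,X_j\}\cup\mathcal{S}$. This is the key step of the paper's proof, which organizes the argument around the last collider $C$ before $Y$. The collider-free remainder is then $C\pathi Y$ or $C\pathi H\ipath Y$, and this pattern transfers to $X_j$.

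With that replacement your fork-based analysis closes. The branch from $H$ either
\begin{itemize}
\item runs directed into $\mathcal{S}$, giving $\mathcal{S}\pathi H\ipath Y$;
\item passes through $X_j$ as a chain, and is then blocked; or
\item hits a first collider, which must lie in $\{X_j\}\cup\ancestors(X_j)$ and then yields $X_j\pathi H\ipath Y$.
\end{itemize}
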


\begin{proof}
We will show the identity for the conditional expectation first. For both scenarios, "no backdoor paths" and "purely causal setup"  we will use rule 2 of $\doop$ calculus \citep{pearl2009causality} to replace the $\doop$-operation by conditioning. We have to show
\begin{align}
    \label{eq:rule_2}
    Y \ind_{\mathcal{G}_{\underline{\mathcal{S}}}} \mathcal{S} | X_j  \,,
\end{align}
where $\mathcal{G}_{\underline{\mathcal{S}}}$ denotes the graph where all outgoing edges from $\mathcal{S}$ were removed. Consider first the ``no backdoor paths'' scenario:
Any path between $\mathcal{S}$ and $Y$ in $\mathcal{G}_{\underline{\mathcal{S}}}$ that is unblocked conditional on $X_j$ will have an incoming edge when leaving $\mathcal{S}$ (as it runs in $\mathcal{G}_{\underline{\mathcal{S}}}$) and either 1) not contain $X_j$ or 2) contain $X_j$ or ancestors of $X_j$ as colliders and no other colliders. In the first case, we have just an unblocked backdoor path from $\mathcal{S}$ to $Y$ and in the second case we can subtract an unblocked subpath which is a backdoor path between $\mathcal{S}$ and $X_j$. Both violate our assumption and therefore we conclude \eqref{eq:rule_2}.

For the "purely causal setup" scenario consider again a path between $\mathcal{S}$ and $Y$ in $\mathcal{G}_{\underline{\mathcal{S}}}$ that is unblocked conditional on $X_j$. Denote by $X_k\in \mathcal{S}$ the last element of this path within $\mathcal{S}$ on the way to $Y$ and consider from now on the unblocked subpath between $X_k$ and $Y$. Since we removed outgoing edges from $X_k$ this path can only have an incoming edge to $X_k$. Due to our assumptions this path cannot be of the form $X_k \pathi Y$. The only remaining paths that are unblocked, conditional on $X_j$, have a fork. This fork cannot be $\leftarrow X_j \rightarrow$ as it would be blocked conditional on $X_j$. We also know, by assumption, that there is no $H$ such that $X_k \pathi H \ipath Y$. The arrows must thus flip directions before or after the fork, which means that there is a collider.  The only way this cannot lead to a blocked path is that $X_j$ or a set of ancestors of $X_j$ are these colliders and there are no other colliders. We will write $\mathcal{C}\subseteq \{X_j\} \cup \ancestors(X_j)$ for this set. Call $C \in \mathcal{C}$ the last element in $\mathcal{C}$ on the way of the considered path to $Y$. Since $C$ is a collider, as there is no further collider between $C$ and $Y$ on the path  and as the remaining bit of the path between $C$ and $Y$ is unblocked we must either have  $C \pathi H \ipath Y$ for some $H$ or $C \pathi Y $, depending on the direction of the last edge of the path. In either case we obtain due to $C\in \{X_j\} \cup \ancestors(X_j)$ that either $X_j \pathi H \ipath Y$ or $X_j \pathi Y$, which both violate our assumptions. Hence, there is no unblocked path and \eqref{eq:rule_2} follows.

We still have to show $\EE[Y|\doop(\mathcal{S})]=\EE[Y|\mathcal{S}]$, which follows again from rule 2 of $\doop$ calculus once we know 
\begin{align*}
Y \ind_{\mathcal{G}_{\underline{\mathcal{S}}}} \mathcal{S}\,.
\end{align*}
For the "no backdoor paths" scenario this follows from the assumption that there are no backdoor paths from $\mathcal{S}$ to $Y$. For the "purely causal setup" scenario we can use that any unconditional unblocked backdoor path is either form $\mathcal{S} \pathi Y$ or $\mathcal{S} \pathi H \ipath Y$, which are cases which we both excluded in our assumption.
\end{proof}

\subsection{Backdoor adjustment}

The following lemma provides an adaption of the ``backdoor adjustment'' \citep[Theorem 3.3.2]{pearl2009causality} for the context of this work.

\begin{lemma}[Version of backdoor adjustment]
\label{lem:backdoor_adjustment}
Consider $X_j\in \mathcal{F},\,\mathcal{S},\mathcal{W}\subseteq \mathcal{F}\backslash\{X_j\}, \mathcal{S}\cap \mathcal{W}=\emptyset$ such that
\begin{itemize}
    \item There are no causal paths $\mathcal{S} \ipath \mathcal{W}\cup\{X_j\}$
    \item Both, $\mathcal{W}$ and $\mathcal{W}\cup\{X_j\}$, block all backdoor paths between $\mathcal{S}$ and $Y$.
\end{itemize}
then we have the identity
\begin{align*}
   I_{\doop(\mathcal{S})} (X_j)=\EE_{\mathcal{W}|X_j}[\EE[Y|X_j,\mathcal{S},\mathcal{W}]] - \EE_{\mathcal{W}}[\EE[Y|\mathcal{S},\mathcal{W}]] \,.
\end{align*}
\end{lemma}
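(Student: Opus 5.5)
We prove the two terms separately. By Definition \ref{def:cc_Shapley}, $I_{\doop(\mathcal{S})}(X_j)=\EE[Y|X_j,\doop(\mathcal{S})]-\EE[Y|\doop(\mathcal{S})]$, so it suffices to show
\begin{align*}
\EE[Y|X_j,\doop(\mathcal{S})] &= \EE_{\mathcal{W}|X_j}\big[\EE[Y|X_j,\mathcal{S},\mathcal{W}]\big], \\
\EE[Y|\doop(\mathcal{S})] &= \EE_{\mathcal{W}}\big[\EE[Y|\mathcal{S},\mathcal{W}]\big].
\end{align*}
In both cases we first expand over $\mathcal{W}$ inside the intervened model $\mathcal{M}^{\doop(\mathcal{S})}$. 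For the conditional term this gives
\begin{align*}
\EE[Y|X_j,\doop(\mathcal{S})]=\int \EE[Y|X_j,\mathcal{W}=w,\doop(\mathcal{S})]\,dP(w|X_j,\doop(\mathcal{S})).
\end{align*}
The proof then rests on two replacements, each justified by a rule of $\doop$ calculus.

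\emph{Step 1 (weights).} We remove the intervention from the mixing distribution, i.e.\ show $P(\mathcal{W}|X_j,\doop(\mathcal{S}))=P(\mathcal{W}|X_j)$ and $P(\mathcal{W}|\doop(\mathcal{S}))=P(\mathcal{W})$. Intuitively, intervening on $\mathcal{S}$ only changes descendants of $\mathcal{S}$, and by the first assumption neither $\mathcal{W}$ nor $X_j$ is a descendant. Formally, let $\mathcal{A}$ be the set of non-descendants of $\mathcal{S}$, which contains $\mathcal{W}\cup\{X_j\}$. The joint law of $\mathcal{A}$ is generated by the same assignment functions and noises in $\mathcal{M}$ and $\mathcal{M}^{\doop(\mathcal{S})}$, because no ancestor of a node in $\mathcal{A}$ lies in $\mathcal{S}$. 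Hence the joint law of $(\mathcal{W},X_j)$ is unchanged, and so are its conditionals. This argument avoids verifying a rule-3 d-separation.

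\emph{Step 2 (integrand).} We replace $\doop(\mathcal{S})$ by conditioning via rule 2, which requires
\begin{align*}
Y\ind_{\mathcal{G}_{\underline{\mathcal{S}}}}\mathcal{S}\,|\,\mathcal{W}\cup\{X_j\} \quad\text{and}\quad Y\ind_{\mathcal{G}_{\underline{\mathcal{S}}}}\mathcal{S}\,|\,\mathcal{W}.
\end{align*}
Any path from $\mathcal{S}$ to $Y$ in $\mathcal{G}_{\underline{\mathcal{S}}}$ must leave $\mathcal{S}$ through an incoming edge, so it is a backdoor path. The second assumption says such paths are blocked by $\mathcal{W}$ and by $\mathcal{W}\cup\{X_j\}$. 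Some care is needed at this point. Blocking in $\mathcal{G}$ has to transfer to $\mathcal{G}_{\underline{\mathcal{S}}}$, since ancestor sets of colliders only shrink when edges are removed. We also take the path to start at its last $\mathcal{S}$-node, as in the proofs of Lemmas \ref{lem:irrelevant_context} and \ref{lem:observation_equals_intervention}.

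\emph{Main obstacle.} The delicate point is Step 2 with $X_j$ included in the conditioning set. Adding $X_j$ could open colliders, but the hypothesis explicitly requires that $\mathcal{W}\cup\{X_j\}$ blocks all backdoor paths, so that case is covered directly. The remaining subtlety is that the assumption is stated in $\mathcal{G}$ while rule 2 needs $\mathcal{G}_{\underline{\mathcal{S}}}$; we would handle this with the monotonicity remark above.

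Combining Steps 1 and 2 inside the integral yields the first displayed identity, and the same argument with $X_j$ dropped yields the second. Subtracting the two gives the claimed formula for $I_{\doop(\mathcal{S})}(X_j)$.
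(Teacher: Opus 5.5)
Your proposal is correct and follows the paper's proof. Like the paper, you expand both terms over $\mathcal{W}$ under $\doop(\mathcal{S})$, use rule 2 (via the backdoor-blocking assumption for $\mathcal{W}$ and $\mathcal{W}\cup\{X_j\}$) on the inner expectation, and remove the intervention from the law of $(X_j,\mathcal{W})$ because these are non-descendants of $\mathcal{S}$. The differences are cosmetic: you justify the last step directly from the SCM rather than by rule 3, and you spell out the transfer of blocking from $\mathcal{G}$ to $\mathcal{G}_{\underline{\mathcal{S}}}$ (via the shrinking of $\ancestors(\mathcal{Z})$ for the conditioning set $\mathcal{Z}$), which the paper leaves implicit.
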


\begin{proof}
Recall that $I_{\doop(\mathcal{S})} (X_j)=\EE[Y|X_j,\doop(\mathcal{S})]-\EE[Y|\doop(\mathcal{S})]$. By the rules for conditional expectations, we have
\begin{align*}
    \EE[Y|X_j,\doop(\mathcal{S})] &= \EE_{\mathcal{W}|X_j,\doop(\mathcal{S})} [\EE[Y|X_j,\doop(\mathcal{S}),\mathcal{W}]] \mbox{ and } \\
    \EE[Y|\doop(\mathcal{S})] &= \EE_{\mathcal{W}|\doop(\mathcal{S})} [\EE[Y|\doop(\mathcal{S}),\mathcal{W}]]\,.
\end{align*}
Since the sets $\mathcal{W}$ and $\mathcal{W}\cup\{X_j\}$ block all backdoor paths between $\mathcal{S}$ and $Y$ we can replace the inner $\doop$ operation by conditioning (rule 2 of $\doop$ calculus \citep{pearl2009causality}) and obtain
\begin{align*}
    \EE[Y|X_j,\doop(\mathcal{S})] &= \EE_{\mathcal{W}|X_j,\doop(\mathcal{S})} [\EE[Y|X_j,\mathcal{S},\mathcal{W}]] \mbox{ and } \\
    \EE[Y|\doop(\mathcal{S})] &= \EE_{\mathcal{W}|\doop(\mathcal{S})} [\EE[Y|\mathcal{S},\mathcal{W}]]\,.
\end{align*}
To conclude, we remove the $\doop$ intervention in the outer expectation using $P(X_j,\mathcal{W}|\doop(\mathcal{S}))=P(X_j,\mathcal{W})$, which follows from the fact that there are no causal paths $\mathcal{S} \ipath \mathcal{W}\cup\{X_j\}$ (rule 3 of $\doop$ calculus \citep{pearl2009causality}).
\end{proof}

Application of Lemma \ref{lem:backdoor_adjustment} requires specific conditions on the relation between $X_j$ and $\mathcal{S}$ as well as a search for a suitable set $\mathcal{W}$. Once found, we have to find a reasonable estimate for the conditional probability $P(\mathcal{W}|X_j)$, which can be a complicated task, especially for higher dimensional $\mathcal{W}$. In the experiments in Section \ref{sec:experimental_results}, we therefore found a it more practical to first estimate the assignment functions in the SCM followed by the application of Algorithm \ref{alg:compute_importance}. Identification via more advanced techniques as in \cite{teal2026exactly} or \cite{jung2022measuring} could be a feasible alternative but were not studied in the scope of this work.

\subsection{Reasoning behind Algorithm \ref{alg:compute_importance}}
\label{app:justification_of_algorithm}
Algorithm \ref{alg:compute_importance} implicitly claims that fitting data-driven models in the modified model $\mathcal{M}^{\doop(\mathcal{S} \sim q)}$ yields $\EE[Y|\doop(\mathcal{S})]$ and $\EE[Y|X_j, \doop(\mathcal{S})]$, where we denote by $q$ the joint marginal of $\mathcal{S}$ within the original SCM $\mathcal{M}$.

We already recalled in Lemma \ref{lem:conditional_expectation_as_optimal_model} the established fact that fitting a data driven model with standard L2-loss (regression) or cross-entropy-loss (classification) gives an estimate of the conditional expectation. We are thus left with the claim that
\begin{align}
\label{eq:algorithm_claim}
\begin{aligned}
    \EE[Y|X_j, \doop(\mathcal{S})] &= \EE_{\mathcal{M}^{\doop(\mathcal{S}\sim q)}}[Y|X_j,\mathcal{S}] \,, \\
    \EE[Y|\doop(\mathcal{S})] &= \EE_{\mathcal{M}^{\doop(\mathcal{S}\sim q)}}[Y|\mathcal{S}]
\end{aligned}
\end{align}
to motivate Algorithm \ref{alg:compute_importance}. The causal graph of $\mathcal{M}^{\doop(\mathcal{S}\sim q)}$ arises from the one of $\mathcal{M}$ by deleting all incoming edges to $\mathcal{S}$. In particular, in $\mathcal{M}^{\doop(\mathcal{S}\sim q)}$ there are no backdoor paths from $\mathcal{S}$ to $Y$, neither blocked nor unblocked. Hence rule 2 of $\doop$ calculus \citep{pearl2009causality} applies and we have
\begin{align}
\label{eq:remove_do_modified_scm}
\begin{aligned}
    \EE_{\mathcal{M}^{\doop(\mathcal{S}\sim q)}}[Y|X_j, \doop(\mathcal{S})] &= \EE_{\mathcal{M}^{\doop(\mathcal{S}\sim q)}}[Y|X_j,\mathcal{S}] \,, \\
    \EE_{\mathcal{M}^{\doop(\mathcal{S}\sim q)}}[Y|\doop(\mathcal{S})] &= \EE_{\mathcal{M}^{\doop(\mathcal{S}\sim q)}}[Y|\mathcal{S}] \,.
\end{aligned} 
\end{align}
Now, once we intervene on $\mathcal{S}$ via $\doop(\mathcal{S})$ to set it to a fixed value, the actual marginal of $q$ does not influence the distribution of all non-intervened variables $\mathcal{V}\backslash\mathcal{S}$ so that we can perform the transition $\mathcal{M}^{\doop(\mathcal{S})} \rightarrow \mathcal{M}$ in the following equations
\begin{align}
\label{eq:remove_modified_model}
\begin{aligned}
    \EE_{\mathcal{M}^{\doop(\mathcal{S}\sim q)}}[Y|X_j, \doop(\mathcal{S})] &= \EE[Y|X_j,\doop(\mathcal{S})] \,, \\
    \EE_{\mathcal{M}^{\doop(\mathcal{S}\sim q)}}[Y|\doop(\mathcal{S})] &= \EE[Y|\doop(\mathcal{S})] \,.
\end{aligned}
\end{align}

Combing \eqref{eq:remove_modified_model} with \eqref{eq:remove_do_modified_scm}, we obtain the identity \eqref{eq:algorithm_claim} that underpins Algorithm \ref{alg:compute_importance}.

\paragraph{The exact choice of $q$ doesn't matter.} Note, that arguments above never used the fact that $q$ is the marginal of $\mathcal{S}$  within the original model $\mathcal{M}$. In fact, we can take any distribution $q$ which is supported on all $\mathcal{S}$ values which we want to set as context. Another possible choice might be to combine the marginals of $X_k\in \mathcal{S}$ in an independent manner. For the experiments in Section \ref{sec:experimental_results} we found that such a proceeding gives indeed practically indistinguishable results to the ones presented here.

\section{Experimental details and additional results}

Throughout this work, with the exception of the linear SCM experiment, we used \texttt{XGBoost} \citep{Chen2016XGBoost} for fitting data-driven models.

\subsection{Diabetes and breakfast example (Example \ref{ex:diabetes_breakfast})}
\label{app:details_on_diabetes_breakfast}

Example \ref{ex:diabetes_breakfast} from Section \ref{sec:introduction} can be described by an SCM of the following form
\begin{align}
    \label{eq:scm_diabetes_breakfast}
    \begin{aligned}
    C &= U_C, \, U_C \sim \mathcal{N} (60;25^2)\,, \\
    Y &= U_Y, \, U_Y \sim \mathrm{Bernoulli}(0.15)\,, \\
    G &= 85 + 0.4 \cdot C + 40 \cdot Y + U_G, U_G \sim \mathcal{N}(0;10^2)\, ,
    \end{aligned}
\end{align}
where $\mathrm{Bernoulli}(p)$ denotes the Bernoulli distribution with success probability $p$ and $\mathcal{N}(\mu;\sigma^2)$ denotes the normal distribution with mean $\mu$ and variance $\sigma^2$. For computing the Shapley values \eqref{eq:shapley_values} displayed in Figure \ref{fig:shapley_values_diabetes_breakfast} (left), we fitted all conditional expectations via XGBoost on $3 \cdot 10^6$ random samples from \eqref{eq:scm_diabetes_breakfast}, using Lemma \ref{lem:conditional_expectation_as_optimal_model} above. For the cc-Shapley values \eqref{eq:shapley_values_do}, we applied Algorithm \ref{alg:compute_importance}: For each term in \eqref{eq:shapley_values_do}, we first modified the SCM accordingly and then used, once more, XGBoost on $3 \cdot 10^6$ random samples to fit each conditional expectation. Figure \ref{fig:shapley_values_diabetes_breakfast} was then produced by evaluating \eqref{eq:shapley_values} and \eqref{eq:shapley_values_do} on $10^4$ (new) data points drawn from \eqref{eq:scm_diabetes_breakfast}.

All $I$ objects involved in the computation of the Shapley values are plotted in Figure \ref{appfig:diabetes_breakfast_shape_functions_shapley}. For cc-Shapley the according objects are plotted in Figure \ref{appfig:diabetes_breakfast_shape_functions_do_shapley}. The univariate terms $I_{\emptyset}(C)$ and $I_{\emptyset}(G)$, shown on the diagonal in Figures \ref{appfig:diabetes_breakfast_shape_functions_shapley} and \ref{appfig:diabetes_breakfast_shape_functions_do_shapley}, agree for both approaches by definition.

\paragraph{Derivation of $I_{C}(G)$.}In Section \ref{subsec:using_the_causal_context} we gave an explicit expression for $I_{C}(G)$, which we derive here.
By definition we have, using the independence of $Y$ and $C$,
\begin{align*}
    I_{C}(G) &=\EE[Y|G,C]  - \EE[Y|C] \\ 
    &= P(Y=1|G,C) - P(Y=1|C) \\
    & = P(Y=1|G,C) - P(Y=1) \\
    & =  P(Y=1|G,C) - 0.15 \,.
\end{align*}
Thus, it remains to estimate $P(Y=1|G,C)$.
From the SCM \eqref{eq:scm_diabetes_breakfast} we obtain the joint distribution
\begin{align*}
    P(Y,G,C)=\mathrm{Bernoulli}(Y|0.15) \cdot  \mathcal{N}(C|60;25^2) \cdot \mathcal{N}(G|0.4C+85+40Y;10^2) \,.
\end{align*}
Writing $p=0.15$ we can use this expression to compute
\begin{align*}
    P(Y=1|G,C)&=\frac{P(Y=1,G,C)}{P(Y=1,G,C)+P(Y=0,G,C)} \\
        &= \frac{1}{1+\frac{1-p}{p}\frac{\mathcal{N}(G|0.4C+85;10^2)}{\mathcal{N}(G|0.4C+125;10^2)}} \\
        &= \frac{1}{1+\frac{1-p}{p}e^{-\frac{1}{2\cdot 10^2}[(G-0.4C-85)^2 - (G-0.4C-125)^2]}} \\
        &= \frac{1}{1+  e^{-[\frac{2}{5} (G-0.4C)-42-\log\frac{1-p}{p}]}} \\
        &= \sigma\left(\frac{2}{5}\left(G-0.4C - 105 - \frac{5}{2} \log \frac{17}{3}\right)\right) \\
        &\simeq \sigma\left(\frac{2}{5}(G-0.4C - 109)\right)\,,
\end{align*}
where $\sigma$ denotes the sigmoid function. In total, we thus have
\begin{align*}
    I_{C}(G)  \simeq \sigma\left(\frac{2}{5}(G-0.4C - 109)\right) - 0.15 \,.
\end{align*}

\begin{figure}[h!]
\centering
\setlength{\tabcolsep}{6pt}
\renewcommand{\arraystretch}{1.2}

\begin{tabular}{c C{0.33\textwidth} C{0.33\textwidth}}

 & $I_{\emptyset}(C)$
 & $I_{G}(C)$ \\

  & \hspace{-0.8cm}\includegraphics[width=\linewidth]{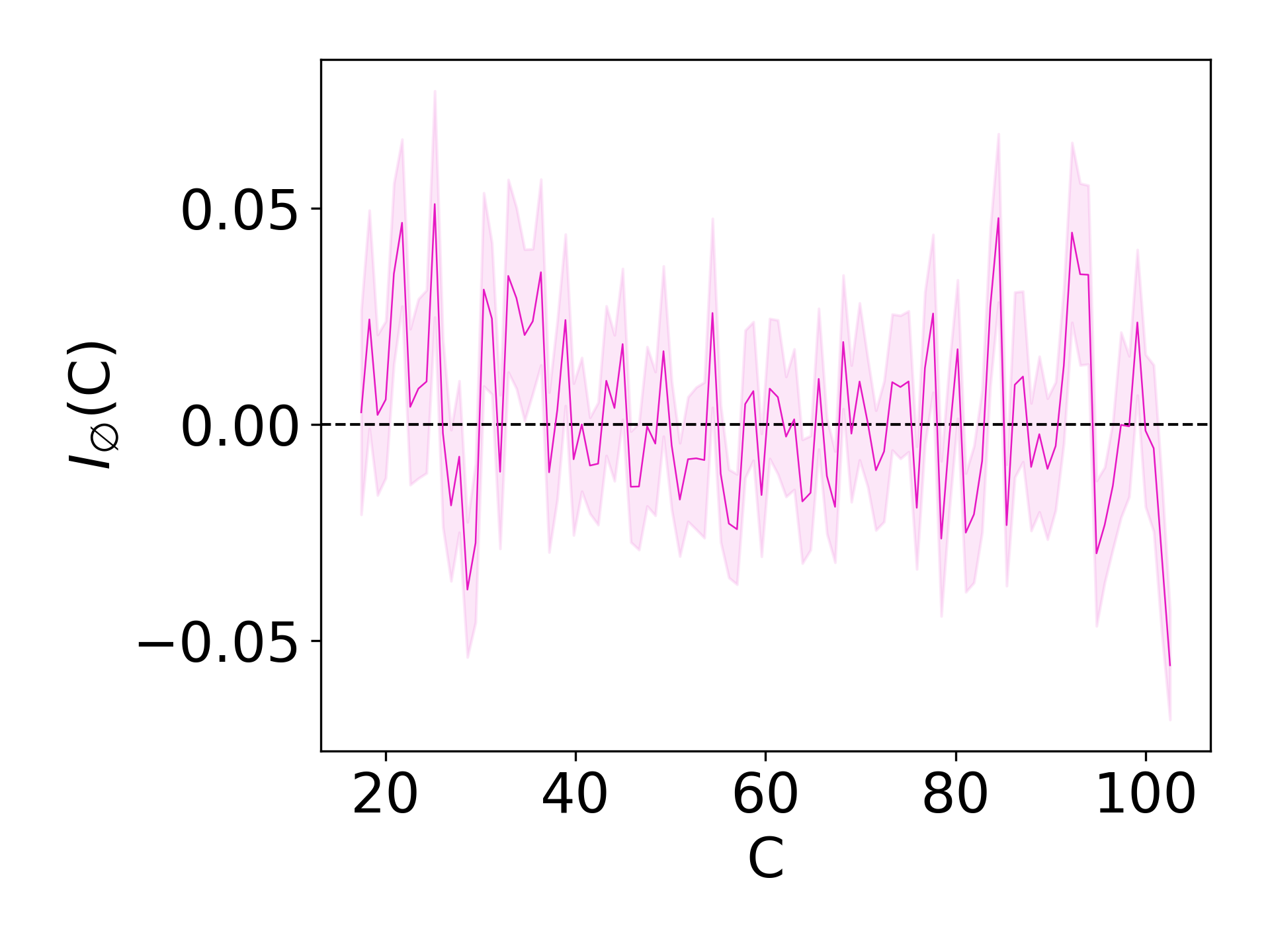}
 & \hspace{-0.4cm}\includegraphics[width=\linewidth]{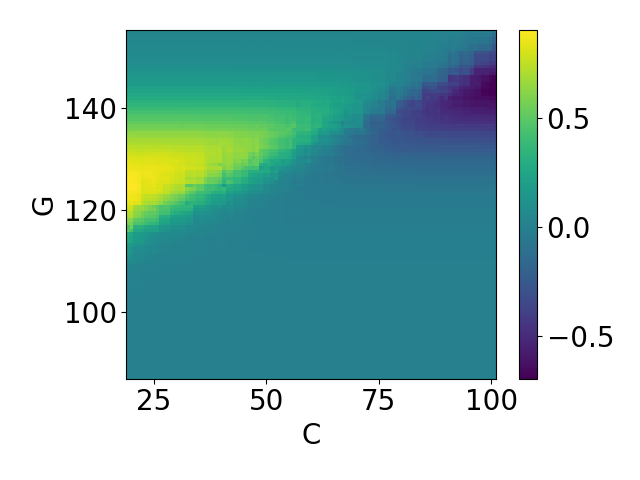}\\ 

 & $I_C(G)$
 & $I_\emptyset(G)$ \\

 & \includegraphics[width=\linewidth]{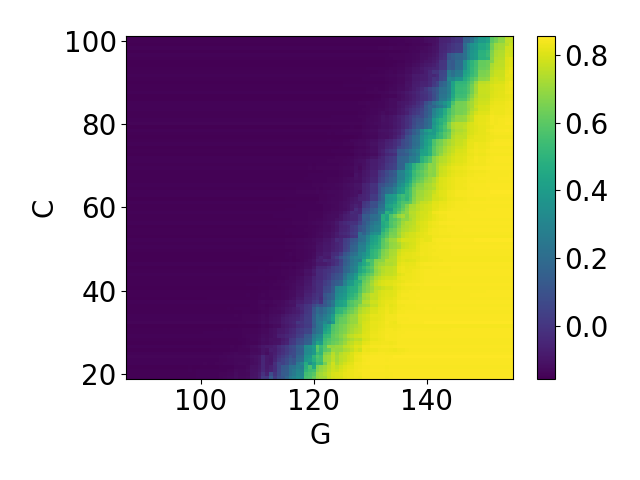}
 & \includegraphics[width=\linewidth]{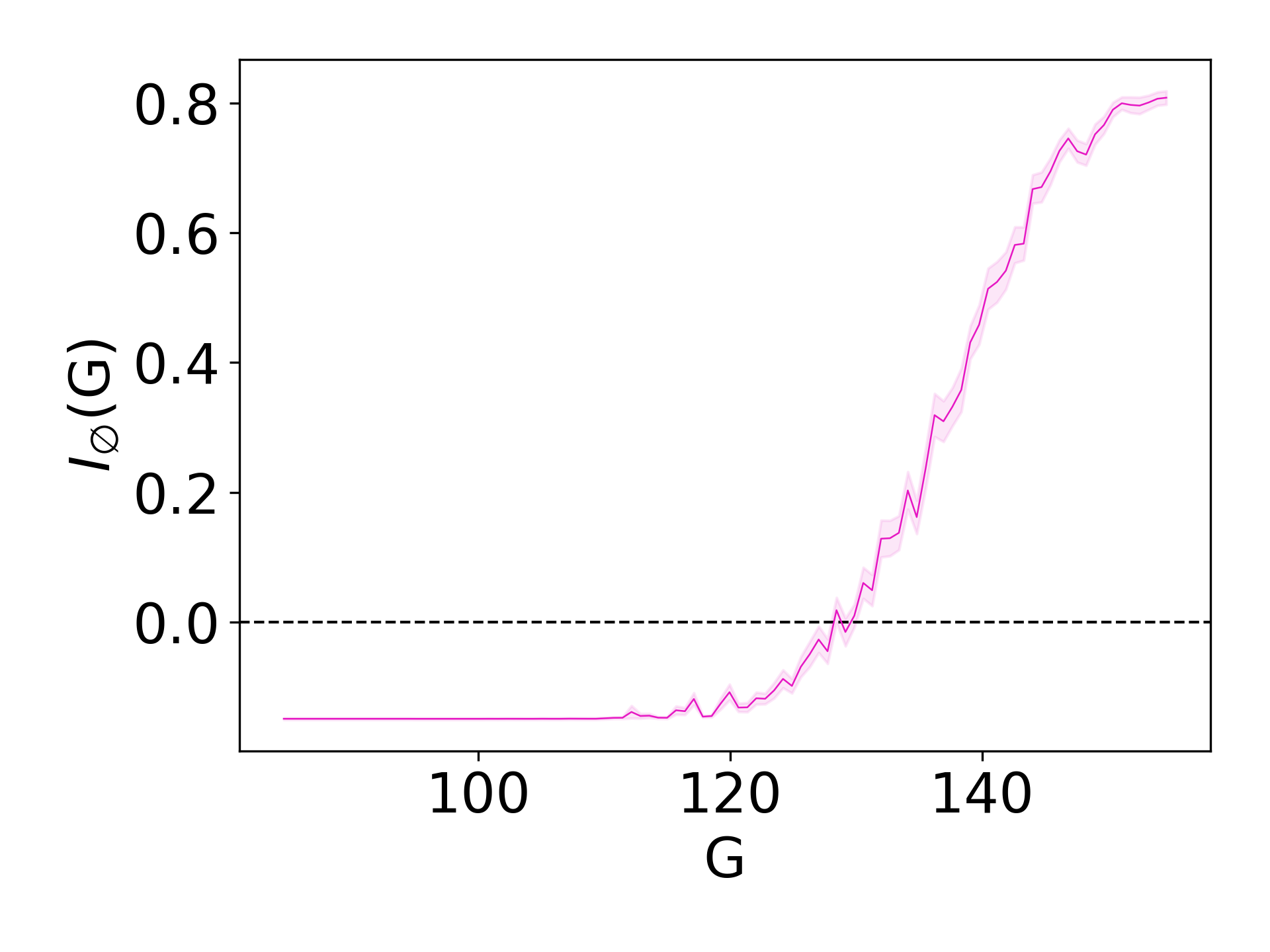}\\ 

\end{tabular}
\caption{Plot of all objects arising in the computation of the conventional Shapley values \eqref{eq:shapley_values} for Example \ref{ex:diabetes_breakfast}.}
\label{appfig:diabetes_breakfast_shape_functions_shapley}
\end{figure}

\begin{figure}[h!]
\centering
\setlength{\tabcolsep}{6pt}
\renewcommand{\arraystretch}{1.2}

\begin{tabular}{c C{0.33\textwidth} C{0.33\textwidth}}

 & $I_{\emptyset}(C)$
 & $I_{\doop(G)}(C)$ \\

 & \hspace{-0.8cm}\includegraphics[width=\linewidth]{figures/univariate_importance_diabetes_breakfast_Y_features_C.png}
 & \hspace{-0.4cm}\includegraphics[width=\linewidth]{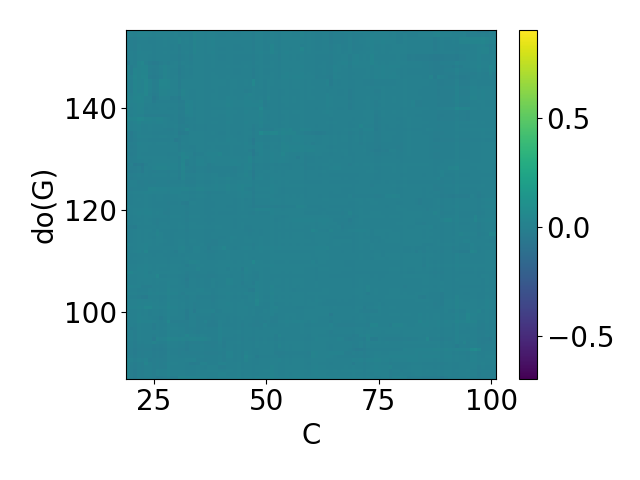}\\ 

 & $I_{\doop(C)}(G)$
 & $I_\emptyset(G)$ \\

 & \includegraphics[width=\linewidth]{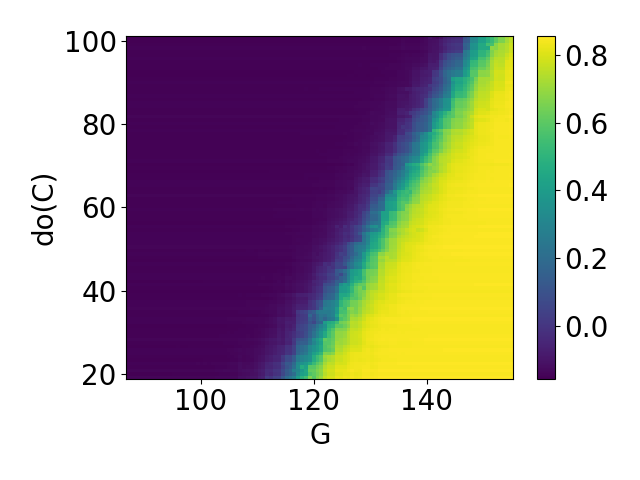}
 & \includegraphics[width=\linewidth]{figures/univariate_importance_diabetes_breakfast_Y_features_G.png}\\ 

\end{tabular}
\caption{Plot of all objects arising in the computation of the cc-Shapley values \eqref{eq:shapley_values_do} for Example \ref{ex:diabetes_breakfast}.}
\label{appfig:diabetes_breakfast_shape_functions_do_shapley}
\end{figure}
\newpage

\subsection{Linear SCM experiment}
\label{app:details_on_linear_SCM_experiment}
For the experiment underlying Figure \ref{fig:linear_scm_experiment}, we sampled 3,000 linear SCMs with 9 variables (including the target). To this end, we applied Algorithm \ref{alg:random_linear_SCM} with $n=9$ and $p=0.8$ to obtain a random adjacency matrix $A$ and then used i.i.d. random variables $U_1,U_2,\ldots,U_9 \sim \mathrm{Laplace}(0;0.1)$ (Laplace distribution with mean $0$ and scale $0.1$) to define the linear SCM
\begin{align}
    \label{eq:linear_SCM}
    \mathbf{V} = A \mathbf{V} + \mathbf{U} \,,
\end{align}
where $\mathbf{V}\in \RR^9$ denotes the variables of the SCM and $\mathbf{U}=(U_i)_{1\leq i\leq 9}$ collects the noise variables. For each noise instance $\mathbf{U}$, equation \eqref{eq:linear_SCM} was solved for $\mathbf{V}$ via $\mathbf{V}=(I-A)^{-1}\mathbf{U}$.
\begin{algorithm}[h]
\caption{Sample adjacency matrix of a linear SCM}
\label{alg:random_linear_SCM}
\KwIn{Number of variables $n$, edge probability $p\in [0,1]$}
\KwOut{adjacency matrix $A\in \RR^{n\times n}$}
\vspace{0.4em}  
\Comment{Create random binary matrix}
Sample i.i.d. $U_{i,j}\sim\mathrm{Uniform}(0,1)$ for ${1\leq i,j\leq n}$ \;
Create $A\leftarrow [U_{i,j} < p]_{1\leq i,j\leq n}$\;

\vspace{0.4em}  
\Comment{Turn into binary adjacency matrix}
For all $i\geq j$ set $A_{ij} \leftarrow 0$ \;
Draw a random $n$-permutation $\pi$ \;
Permute rows and columns of $A$ with $\pi$\;

\vspace{0.4em}
\Comment{Fill with random floats}
Sample i.i.d. $\alpha_{ij}\sim \mathcal{N}(0;1)$ for ${1\leq i,j\leq n}$\;
Replace $A_{ij}\leftarrow A_{ij} \cdot \alpha_{ij}$\;

\vspace{0.4em} 
\Return $A$
\end{algorithm}

From each SCM constructed as above, we sampled $3\cdot 10^4$ samples of $\mathbf{V}$. Target and features are constructed from $\mathbf{V}$ via the assignment:
\begin{align}
    Y &=V_1\,, \\
    X_i &= V_{i+1} \mbox{ for all } 1\leq i \leq 8.
\end{align}
The analysis behind Figure \ref{fig:linear_scm_experiment} then focuses on the relation between $Y,X_1$ and $X_2$.

\paragraph{Computation of $b_{X_1},\,b_{X_1|X_2}, \,b_{X_1,\doop(X_2)}$.}Since $I_{\emptyset}(X_1),I_{X_2}(X_1)$ and $I_{\doop(X_2)}(X_1)$ only differ from $\EE[Y|X_1],\EE[Y|X_1,X_2] $ and $\EE[Y|X_1,\doop(X_2)]$ via $X_1$-independent terms we used the latter to obtain
 $b_{X_1},\,b_{X_1|X_2}, \,b_{X_1,\doop(X_2)}$. For the observational quantities $b_{X_1},\,b_{X_1|X_2}$ we fitted linear univariate and bivariate models with least-squares loss on the $3\cdot 10^4$ samples described above. For the causal quantity $b_{X_1|\doop(X_2)}$, we made a stochastic intervention on $X_2$ in \eqref{eq:linear_SCM} as described in Algorithm \ref{alg:compute_importance} and then conducted another least-squares fit of a bivariate linear model on $3\cdot 10^4$ samples of the modified model. We used least-squares fitting, instead of maximum likelihood fitting, due to the optimality of conditional expectation under the $L2$-norm, cf. Lemma \ref{lem:conditional_expectation_as_optimal_model} above.

\paragraph{Collider impact.} The ``Collider impact'' indicated in the colorbar of Figure \ref{fig:linear_scm_experiment} is measured via a heuristic quantity that is defined as follows:

\begin{align}
    \label{eq:collider_impact}
    \mbox{Collider impact} = \frac{|CP_{X_2}|}{|CP_{X_2}|+|UP_{X_2}|} \,,
\end{align}
where
\begin{align*}
    CP_{X_2} &= \sum_{\mbox{\small $X_1,Y$ paths containing $X_2$ as the only collider}} \prod A_{ij}\,,\\
    UP_{X_2} &= \sum_{\mbox{\small unblocked $X_1,Y$ path containing $X_2$}}\prod A_{ij}
\end{align*}
denote the sum of products of entries of the adjacency matrix along the paths indicated under the summation symbols. In particular, if all paths between $X_1$ and $Y$ that run through $X_2$ and are either unblocked or contain $X_2$ as the only collider, all fall in the second category, the ``Collider impact'' will be 1. If they are all unblocked, ``Collider Impact'' will be 0.  Object \eqref{eq:collider_impact} is (loosely) motivated by Wright's path tracing rules \citep{wright1934method,pearl2013linear}.

For the analysis of paths between $X_1$ and $Y$, we used the \texttt{NetworkX} package from \cite{hagberg2007exploring}.

\subsection{Diabetes and BMI example}
\label{app:details_on_diabetes_risk}
The precise SCM for the nonlinear example outlined in Section \ref{sec:experimental_results} was entirely hand-crafted and is given by 
\begin{align}
\label{eq:scm_diabetes_risk}
\begin{aligned}
    B &= U_B,\, U_B\sim \mathcal{N}(25;5^2) \\
    Y &\sim \mathrm{Bernoulli}(\sigma(-2 + 0.1 \cdot (B-25))) \\
    H &= 5 + 10 \cdot Y  + 0.01 \cdot B^2 + U_H,\, U_H \sim \mathcal{N}(0;1) \\
    G &= 90 + 20 \cdot Y + 30 \cdot \sigma(-0.5 \cdot (H-5)) + B + U_G,\, U_G \sim \mathcal{N}(0;5^2) \;,
\end{aligned}
\end{align}
where $\sigma$ denotes the sigmoid function.
\begin{figure}[p]
\centering
\setlength{\tabcolsep}{6pt}
\renewcommand{\arraystretch}{1.2}

\begin{tabular}{c C{0.30\textwidth} C{0.30\textwidth} C{0.30\textwidth}}

 & $I_{\emptyset}(B)$
 & $I_{G}(B)$ 
 & $I_{H}(B)$ \\

 & \hspace{-0.8cm}\includegraphics[width=\linewidth]{figures/univariate_importance_diabetes_risk_Y_features_B.png}
 & \hspace{-0.4cm}\includegraphics[width=\linewidth]{figures/diabetes_risk_I_OBS_Y_B_G.png}
 & \hspace{-0.4cm}\includegraphics[width=\linewidth]{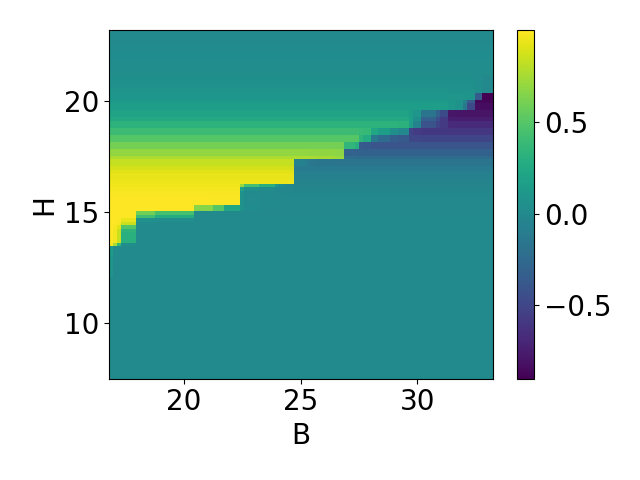}\\ 

 & $I_{B}(G)$
 & $I_\emptyset(G)$ 
 & $I_{H}(G)$ \\

 & \includegraphics[width=\linewidth]{figures/diabetes_risk_I_OBS_Y_G_B.png}
 & \includegraphics[width=\linewidth]{figures/univariate_importance_diabetes_risk_Y_features_G.png}
 & \includegraphics[width=\linewidth]{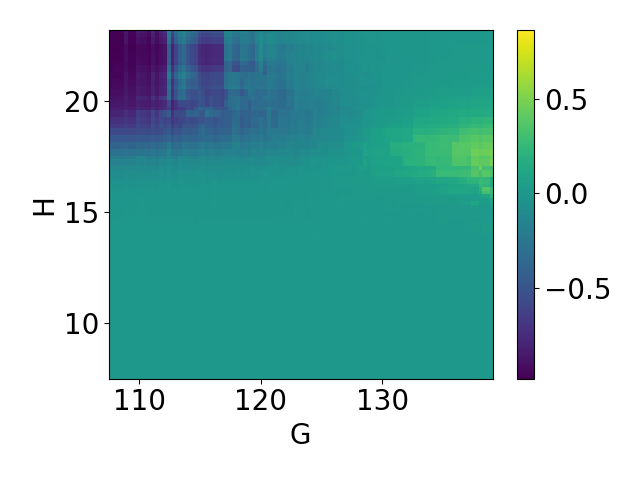}\\ 
 
 & $I_{B}(H)$
 & $I_{G}(H)$ 
 & $I_\emptyset(H)$  \\

 & \includegraphics[width=\linewidth]{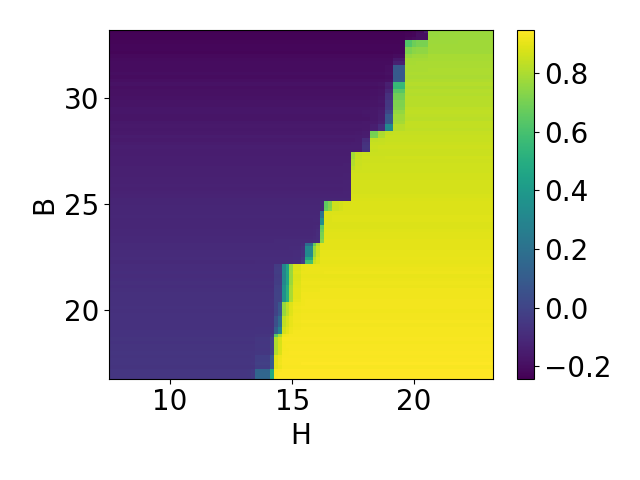}
 & \includegraphics[width=\linewidth]{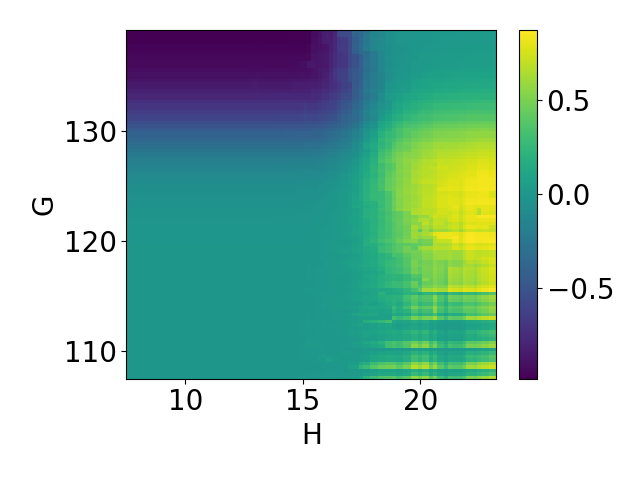}
 & \includegraphics[width=\linewidth]{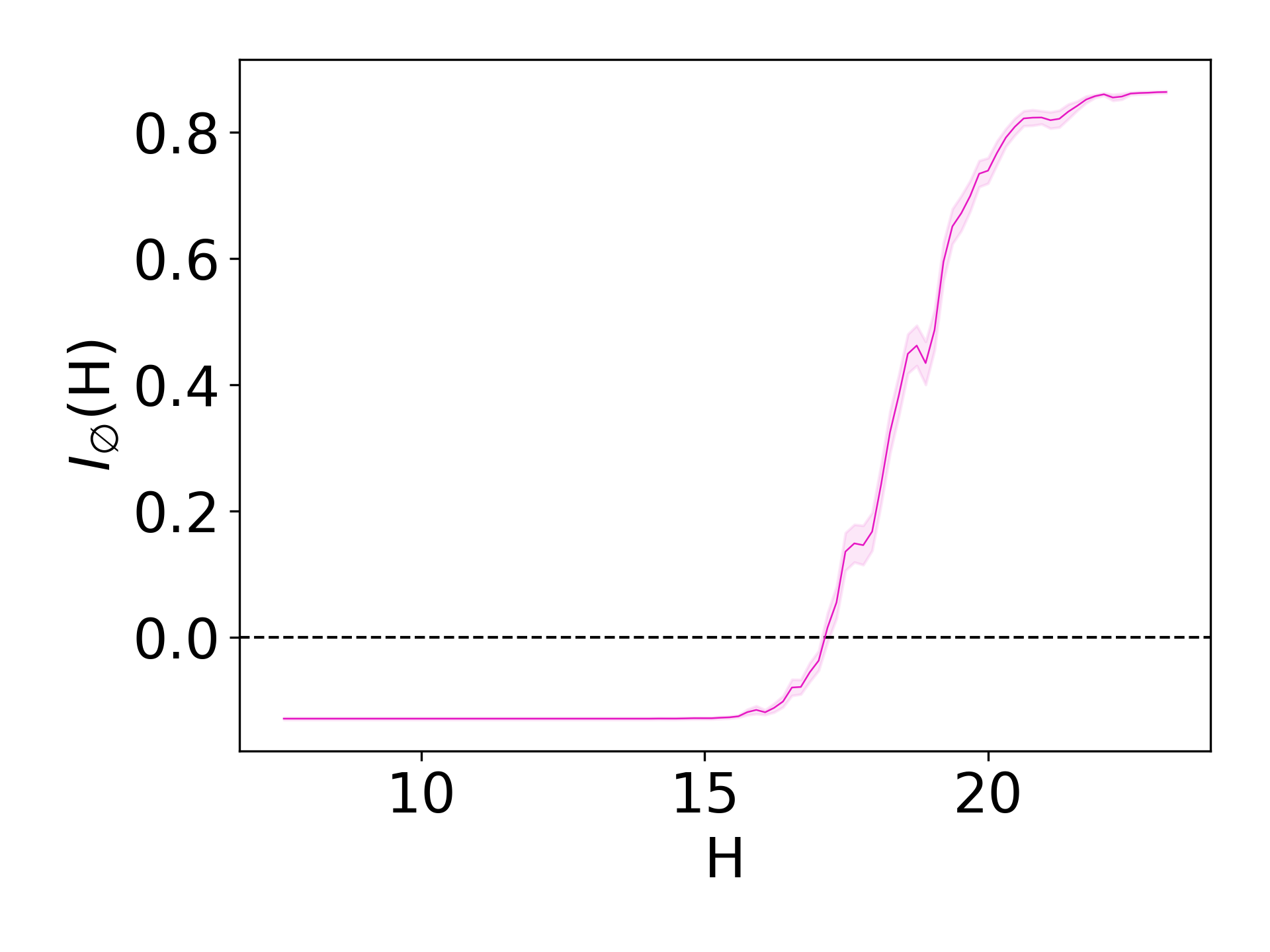}

\end{tabular}
\caption{Plot of all objects with no or univariate context arising in the computation of the conventional Shapley values \eqref{eq:shapley_values} for the SCM \eqref{eq:scm_diabetes_risk}.}
\label{appfig:diabetes_risk_shape_functions_shapley}
\end{figure}

\begin{figure}[p]
\centering
\setlength{\tabcolsep}{6pt}
\renewcommand{\arraystretch}{1.2}

\begin{tabular}{c C{0.30\textwidth} C{0.30\textwidth} C{0.30\textwidth}}

 & $I_{\emptyset}(B)$
 & $I_{\doop(G)}(B)$ 
 & $I_{\doop(H)}(B)$ \\

 & \hspace{-0.8cm}\includegraphics[width=\linewidth]{figures/univariate_importance_diabetes_risk_Y_features_B.png}
 & \hspace{-0.4cm}\includegraphics[width=\linewidth]{figures/diabetes_risk_I_INT_Y_B_G.png}
 & \hspace{-0.4cm}\includegraphics[width=\linewidth]{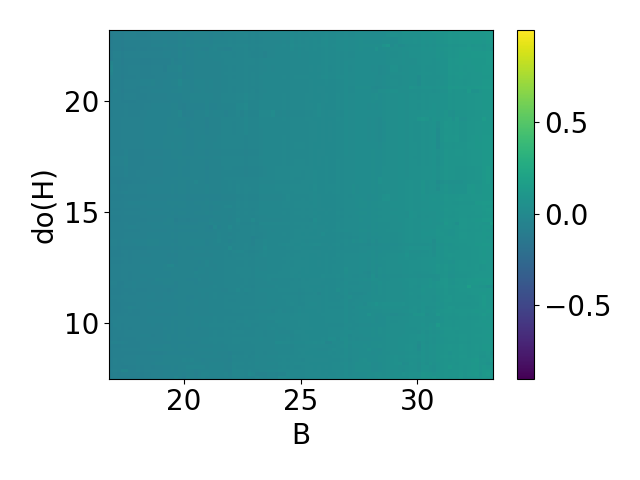}\\ 

 & $I_{\doop(B)}(G)$
 & $I_\emptyset(G)$ 
 & $I_{\doop(H)}(G)$ \\

 & \includegraphics[width=\linewidth]{figures/diabetes_risk_I_INT_Y_G_B.png}
 & \includegraphics[width=\linewidth]{figures/univariate_importance_diabetes_risk_Y_features_G.png}
 & \includegraphics[width=\linewidth]{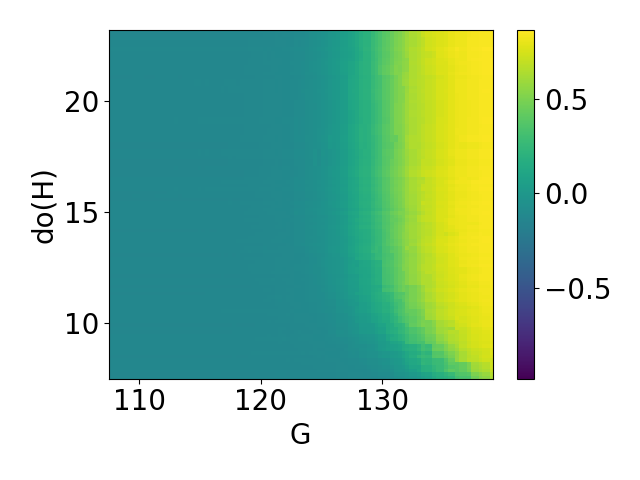}\\ 
 
 & $I_{\doop(B)}(H)$
 & $I_{\doop(G)}(H)$ 
 & $I_\emptyset(H)$  \\

 & \includegraphics[width=\linewidth]{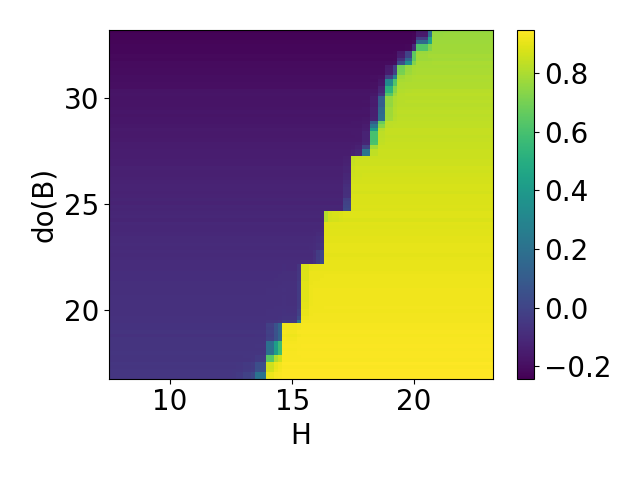}
 & \includegraphics[width=\linewidth]{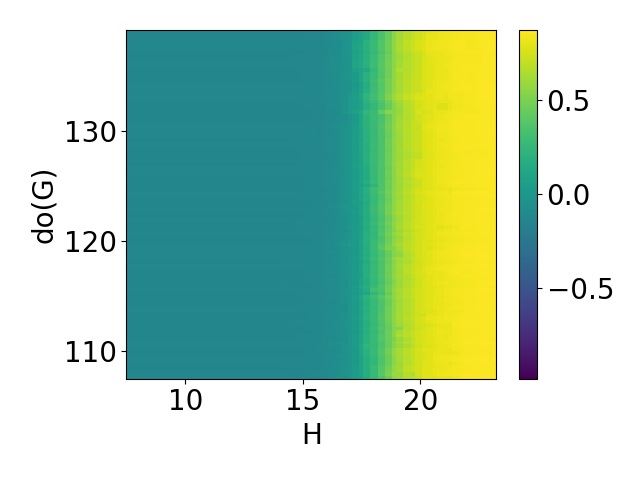}
 & \includegraphics[width=\linewidth]{figures/univariate_importance_diabetes_risk_Y_features_H.png}

\end{tabular}
\caption{Plot of all objects with no or univariate context arising in the computation of the cc-Shapley values \eqref{eq:shapley_values_do} for the SCM \eqref{eq:scm_diabetes_risk}.}
\label{appfig:diabetes_risk_shape_functions_do_shapley}
\end{figure}

The proceeding for \eqref{eq:scm_diabetes_risk} coincides with the simpler example of diabetes detection from Section \ref{app:details_on_diabetes_breakfast} with the only exception that we now have to consider different and more variables in the computation of \eqref{eq:shapley_values} and \eqref{eq:shapley_values_do}. All involved observational $I$ objects with no or univariate context are depicted in Figure \ref{appfig:diabetes_risk_shape_functions_shapley}. The causal analogues for the cc-Shapley values are shown in Figure \ref{appfig:diabetes_risk_shape_functions_do_shapley}.

\paragraph{Derivation of $\phi_{cc}(B)=I_{\emptyset}(B)$.} The sets $\mathcal{S}=\{G\}$, $\mathcal{S}=\{H\}$ and $\mathcal{S}=\{G,H\}$ all have no causal paths to $Y$ or $B$, cf. Figure \ref{fig:causal_graph_diabetes_risk}. Thus, applying Lemma \ref{lem:irrelevant_context} to each of them we obtain $I_{\doop(G)}(B)=I_{\doop(H)}(B)=I_{\doop(\{G,H\})}(B)=I_{\emptyset}(B)$. The claimed identity then follows from the fact that the combinatorial coeffiencents add up to 1 (cf. Section \ref{app:on_the_used_version_of_shapley_values}), in detail
\begin{align}
\phi(B)=\frac{1}{3}I_{\emptyset}(B)+\frac{1}{6}I_{\doop(G)}(B)+\frac{1}{6}I_{\doop(H)}(B)+\frac{1}{3}I_{\doop(\{G,H\})}(B) = I_\emptyset(B) \,.
\end{align}

\newpage
\subsection{Protein dataset from \cite{sachs2005causal}}
\label{app:details_on_sachs}

We used the preprocessed version of the interventional data from \cite{sachs2005causal} available on \cite{bnlearn_sachs05} that bins each variable in the categories $0,1$ and $2$. In addition to the proteins depicted in the causal graph in Figure \ref{fig:causal_graph_sachs} the data contains further information on three proteins \emph{PIP2,PIP3} and \emph{Plcg}. \cite{sachs2005causal} reports their connection to the other proteins in the dataset with a low confidence. We discarded these three proteins for our analysis and only used the ones depicted in Figure \ref{fig:causal_graph_sachs}. In addition, the data contain an indicator that denotes on which of the proteins an experimental intervention happened. For fitting an SCM, we applied the causal graph from Figure \ref{fig:causal_graph_sachs} and fitted an XGBoost classifier for each feature using the parents of each feature as input discarding in each case the data points that were obtained with an intervention on the considered feature (if any).
Subsequently, these models were used as assignment functions to define an SCM on the variables from Figure \ref{fig:causal_graph_sachs}. Since each of the assignment function is a classifier, no assumptions on the underlying noise had to be made. 

Using the SCM as described above, we then computed the Shapley values \ref{eq:shapley_values} and cc-Shapley values \eqref{eq:shapley_values_do}, training, once more, XGBoost classifiers on $10^4$ samples produced via the SCM from above or intervened versions (as described in Algorithm \ref{alg:compute_importance}). As target $Y$, we used the binned concentration of the protein \emph{PKA}. $Y$ can take thus the values $0,1$ and $2$ so that the conditional expectations in \eqref{eq:shapley_values} and \eqref{eq:shapley_values_do} average these numbers using the probabilities produced by the learned classifiers. We evaluated \eqref{eq:shapley_values} and \eqref{eq:shapley_values_do} on $10^4$ (new) datapoints obtained with the learned SCM.

In Figure \ref{fig:shapley_values_sachs_interventional} in Section \ref{sec:experimental_results}, we only showed a subset of the Shapley values and cc-Shapley values. The entire set, including all univariate feature importances, is depicted in Figure \ref{appfig:shapley_values_sachs_interventional}. 

\begin{figure}[h]
    \centering
    \begin{subfigure}[t]{0.49\textwidth}
    \centering
    \includegraphics[width=0.8\linewidth]{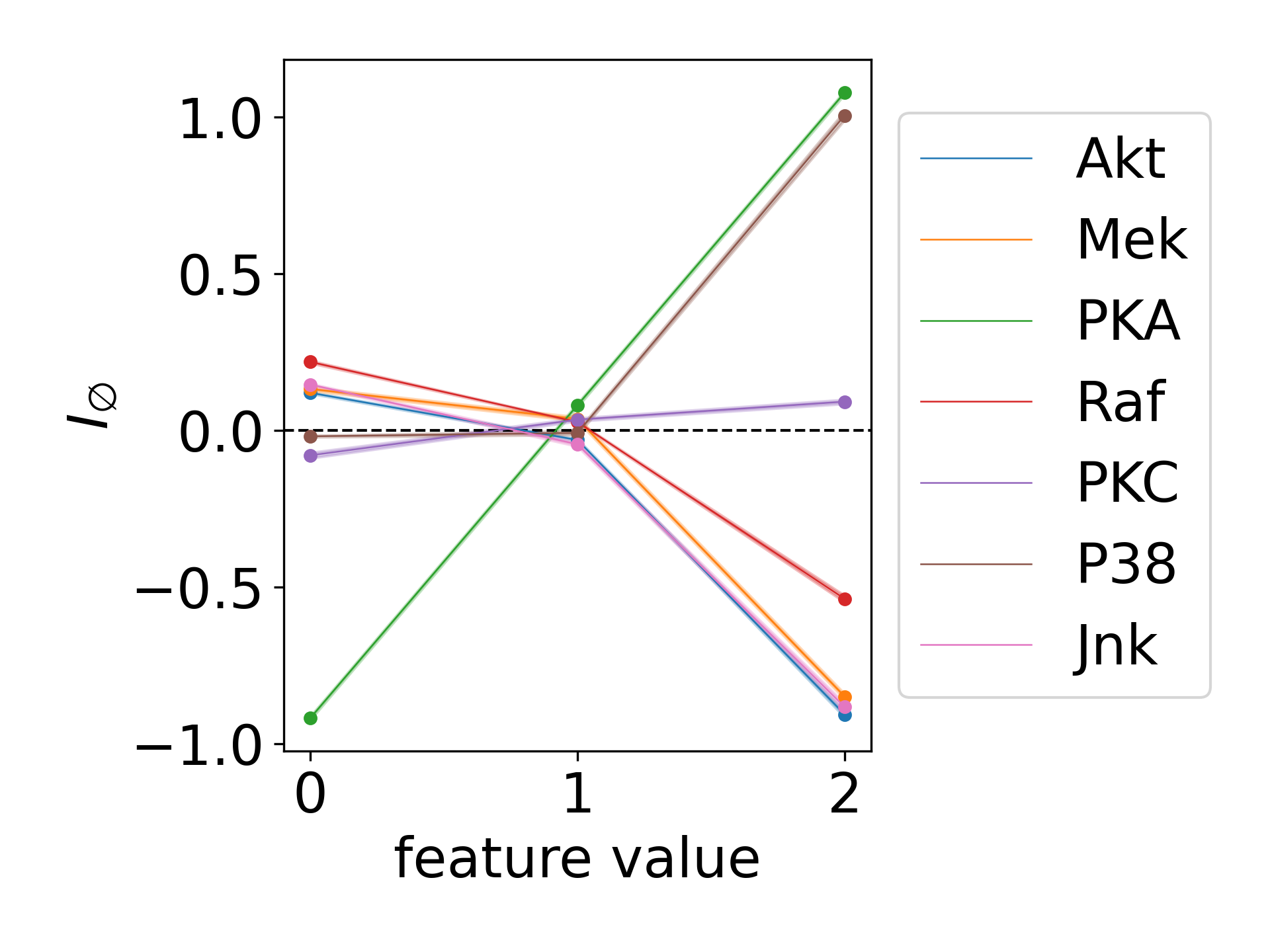}
    \caption{univariate feature importances $I_{\emptyset}$}
    \end{subfigure}
    \begin{subfigure}[t]{0.49\textwidth}
    \includegraphics[width=\linewidth]{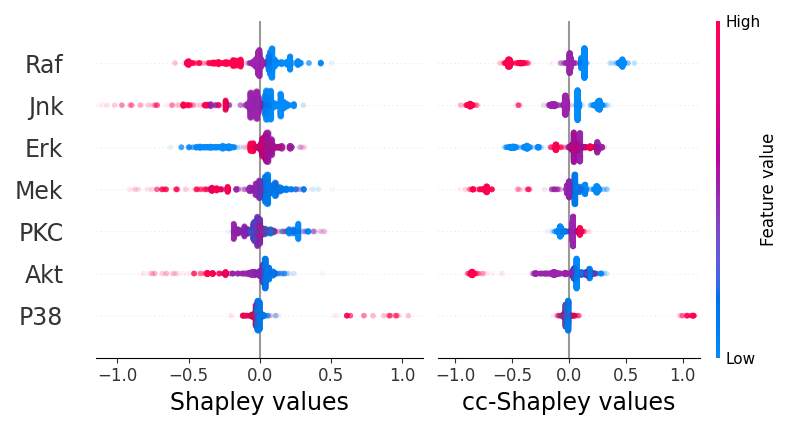}
    \caption{Shapley values and cc-Shapley values}
    \end{subfigure}
    \caption{\emph{Left}: univariate importance of all proteins considered in the experiment with the data from \cite{sachs2005causal}. Standard errors (computed from 5 repetitions) are shown by shaded areas but are very small. \emph{Right}: Shapley values and cc-Shapley values for all considered proteins.}
    \label{appfig:shapley_values_sachs_interventional}
\end{figure}

\end{document}